\def\eqref#1{equation~(\ref{#1})}
\def\1{\bm{1}}
\DeclareMathAlphabet{\mathsfit}{\encodingdefault}{\sfdefault}{m}{sl}
\SetMathAlphabet{\mathsfit}{bold}{\encodingdefault}{\sfdefault}{bx}{n}
\def\gC{{\mathcal{C}}}
\newtheorem{theorem}{Theorem}
\newtheorem{prop}{Proposition}
\newtheorem{corollary}{Corollary}
\newtheorem{defn}{Definition}
\newtheorem{example}{Example}
\newtheorem{remark}{Remark}
\tikzset{basic/.style={draw,fill=blue!80,text width=1em,text badly centered}}
\tikzset{sum/.style={basic,rectangle,fill=white}}
\definecolor{mygray}{gray}{0.8}
\tikzset{node/.style={basic,circle,fill=mygray}}
\tikzset{input/.style={basic,circle}}
\tikzset{output/.style={basic,circle,fill=red}}
\tikzset{weights/.style={basic,rectangle}}
\tikzset{functions/.style={basic,rectangle,fill=green}}
\newcounter{num}
\newcommand{\RR}{\mathbb{R}} %
\newcommand{\wt}[1]{\widetilde{#1}} %
\newcommand{\wh}[1]{\widehat{#1}} %
\newcommand{\ca}[1]{\mathcal{#1}} %
\newcommand{\Stab}{\mathrm{Stab}}
\newcommand{\ReLU}{\mathrm{ReLU}}
\newcommand{\Map}{\mathrm{Map}}
\newcommand{\Sym}{\mathrm{Sym}}
\newcommand{\Inv}{\mathrm{Inv}}
\newcommand{\Equiv}{\mathrm{Equiv}}
\newcommand{\cont}{\mathrm{cont}}
\newcommand{\Ind}{\mathrm{Ind}}
\newcommand{\Hom}{\mathrm{Hom}}
\newcommand{\SO}{\mathrm{SO}}
\newcommand{\GL}{\mathrm{GL}}
\newcommand{\Res}{\mathrm{Res}}
\newcommand{\Sort}{\mathrm{Sort}}
\newcommand{\declareFunction}[5]{
	      \begin{array}{l|rcl}
		        #1: & #2 & \longrightarrow & #3 \\
				    &    &                 &    \\
		            & #4 & \longmapsto     & #5 
	      \end{array}
  }
\newcommand{\function}[2]{#1\left(#2\right)}
\title{Decomposition of Equivariant Maps via Invariant Maps: Application to Universal Approximation under Symmetry}
\author{\name Akiyoshi Sannai\thanks{Both authors contributed equally to this research.}~\thanks{This work includes results obtained while the author was affiliated with the Matsuo-Iwasawa Laboratory at The University of Tokyo, Japan.} \email sannai.akiyoshi.7z@kyoto-u.ac.jp \\
      \addr Department of Physics, Kyoto University, RIKEN\\
      Kitashirakawa, Sakyo, 606-8502, Kyoto, Japan
      \AND
      \name Yuuki Takai\footnotemark[1] \email takai@neptune.kanazawa-it.ac.jp \\
      \addr Kanazawa Institute of Technology\\
      7-1 Ohgigaoka, Nonoichi, 921-8501, Ishikawa, Japan
      \AND
      \name Matthieu Cordonnier \email matthieu.cordonnier@gipsa-lab.fr\\
      \addr GIPSA-lab Grenoble INP, Grenoble INP, Université Grenoble Alpes \\
      11 rue des Mathématiques, Grenoble Campus BP46, F-38402 SAINT MARTIN D'HERES CEDEX, France
      }
\begin{document}

\maketitle

\begin{abstract}
In this paper, we develop a theory about the relationship between invariant and equivariant maps with regard to a group $G$. We then leverage this theory in the context of deep neural networks with group symmetries in order to obtain novel insight into their mechanisms. More precisely, we establish a one-to-one relationship between equivariant maps and certain invariant maps. 
This allows us to reduce arguments for equivariant maps to those for invariant maps and vice versa. 
As an application, we propose a construction of universal equivariant architectures built from universal invariant networks. We, in turn, explain how the universal architectures arising from our construction differ from standard equivariant architectures known to be universal. Furthermore, we explore the complexity, in terms of the number of free parameters, of our models, and discuss the relation between invariant and equivariant networks' complexity.  %
Finally, we also give an approximation rate for $G$-equivariant deep neural networks with ReLU activation functions for finite group $G$.  
\end{abstract}

\section{Introduction}

Symmetries play a fundamental role in many machine learning tasks. Incorporating these symmetries into deep learning models has proven to be a successful strategy in various contexts. Notable examples include the use of convolutional neural networks (CNNs) to address translation symmetries~\citep{LeCun2015,cohen2016group}, deep sets and graph neural networks for permutation symmetries~\citep{deepsets,scarselli2008graph,kipf2017semisupervised,defferard2016gnn}, and spherical CNNs for rotation symmetries~\citep{cohen2018spherical, Esteves2020}. The underlying common principle in all these cases is as follows: once the inherent symmetries of a target task are identified, the learning model is designed to encode these symmetries. In doing so, we aim to improve the quality of learning by building a model that fits the characteristics of the task better.

In mathematics, symmetries are represented by the concepts of groups and group actions. A group is a set of transformations, 
and the action of a group results in a transformation of a given set. The symmetries of group actions are usually divided into two categories: \emph{invariant} tasks and \emph{equivariant} tasks. \emph{Invariant} tasks require the output to remain unchanged by any transformation of the input via the group action. On the other hand, in \emph{equivariant} tasks, a transformation of the input results in a similar transformation of the output. For example, in computer vision, object detection is an invariant task whereas image segmentation is an equivariant task with regard to rotations and shift transformations.  

Despite appearing in tasks of different natures, there are some mathematical relations between invariant and equivariant maps. One can easily verify that the composition of an equivariant map followed by an invariant one results in an invariant map. This relation is at the core of convolutional architectures, which consist of layers made of an equivariant convolution followed by invariant pooling~\citep{LeCun2015}. The same is done by~\citet{maron2019universality,maron2020learning_sets} to define $G$-invariant networks for a finite group $G$. 

In this paper, we explore the relation between invariant and equivariant maps. Our main result Theorem~\ref{thm:main} states that for a given group $G$ acting on a set, there is a \emph{one-to-one correspondence} between $G$-equivariant maps and $H_i$-invariant maps, where the $H_i$ are some stabilizer subgroups of $G$. This allows us to reduce any equivariant tasks to some invariant tasks and vice versa.    

As a main application of Theorem~\ref{thm:main}, we study \emph{universal approximation} for equivariant maps. The \emph{universal approximation theorem}~\citep{pinkus1999approximation}, which is fundamental in deep learning, asserts that any reasonably smooth function can be approximated by an artificial neural network with arbitrary accuracy. In the presence of symmetries, we enforce the neural network architecture to match the symmetries of the target tasks, and, by doing so, we significantly reduce the hypothesis class of neural networks. Naturally, it is essential to ensure that the universal approximation property is not compromised.

It turns out that the universal approximation problem is more involved in the equivariant than in the invariant setup. Indeed, for the symmetric group of permutations of $n$ elements $S_n$,~\citet{deepsets} showed that the \emph{DeepSets} invariant architecture is universal via a representation theorem which is famous as a solution for Hilbert's 13th problem by \citet{kolmogorovrepresentation} and \citet{arnoldrepresentation}. However, they do not provide such a theoretical guarantee for their equivariant architecture. Their result for invariance was then extended to more general groups by~\citet{maron2019universality} and~\citet{yarotsky2018universal}, but it is another series of papers which later solved the equivariant case, each with their own techniques~\citep{keriven2019universal,segol2019universal,ravanbakhsh2020universal}.

Using our decomposition Theorem~\ref{thm:main}, we propose, in Theorem~\ref{thm:approx} an alternative way to build universal $G$-equivariant architectures via $H_i$-invariant maps for some suitable subgroups $H_i\subset G$. As we explain in Remarks~\ref{rem:difference-of-models actions} and~\ref{rem:difference-of-models parameters}, the equivariant universal approximator that we obtain is different from others in the literature.

As a second application of our main theorem, we examine the number of parameters as well as the rate of approximation by invariant/equivariant neural networks with ReLU activation. In Theorem~\ref{thm:ineq-num-of-params}, we provide both lower and upper bounds on the minimal number of parameters required to approximate an equivariant map to a given accuracy with regard to the minimal number of parameters required to approximate an invariant map in that same accuracy. Finally, in Theorem~\ref{thm:approxi-rate-invariant} and Corollary~\ref{cor:approxi-rate-equivariant}, we give an approximation rate for $G$-equivariant neural networks among $G$-equivariant functions with Hölder smoothness condition. This last result is an extension of a result from~\citet{sannai2021improved} from $S_n$ to any finite group $G$.

\subsection{Contributions}
Our contributions are summarized as follows:

\begin{itemize}
	\item We introduce a relation between invariant maps and equivariant maps. This allows us to reduce some problems for equivariant maps to those for invariant maps. 
	\item We apply the relation to constructing universal approximators by equivariant deep neural network models for equivariant maps. %
		Our models for universal approximators are different from the standard models, such as the ones from~\citet{deepsets} or~\citet{maron2018invariant}. However, the number of parameters in our models can be very small compared to the fully connected models. 
	\item As other applications of the relation, we show some inequalities among invariant and equivariant deep neural networks regarding their ability to approximate any invariant and equivariant continuous maps, respectively, for a given accuracy. Moreover, we show an approximation rate of $G$-invariant and $G$-equivariant ReLU deep neural networks for elements of a H\"{o}lder space.  
\end{itemize}

\subsection{Related work}\label{sec:related-works}

\paragraph{Symmetries in machine learning} Symmetries have been considered since the early days of machine learning by~\citet{shawetaylor89,shawetaylor93,WOOD199633}. In contemporary deep learning, they keep generating increasing interest following seminal works from~\citet{kondor2008group,Gens2014,qi2017pointnet,ravanbakhsh17a,deepsets}. The most commonly encountered symmetries involve translations, addressed by convolutional architectures~\citep{LeCun2015} and their generalization to arbitrary compact groups~\citep{cohen2016group,kondor2008group}; rotations~\citep{cohen2018spherical,Esteves2020}; as well as permutations. The latter are particularly relevant for set data~\citep{deepsets,qi2017pointnet,maron2020learning_sets} as well as for graph data via most of the graph neural network architectures~\citep{scarselli2008graph,defferard2016gnn,kipf2017semisupervised,bruna2014spectral}. Permutation symmetries are also present in modern Transformers of architectures, via self attention~\citep{vaswani2017attention} and positional encoding. Recent extension of Transformers for graph learning as proposed by~\citet{kim2021transformers}, utilize Laplacian eigenvectors for positional encoding features. Thus, to address ambiguity in eigenvector choices,~\citet{lim2023sign} recently proposed an architecture that is invariant to change of basis in the eigenspaces.~\citet{villar2021scalars,villar2024towards} are interested in the symmetries arousing from classical physics and the distinction between ``passive symmetries'' coming from physical law and being empirically observed, and ``passive symmetries'' coming from arbitrary choice of design such as labeling of elements of a set or nodes of a graph. Regarding the connection between invariant and equivariant maps, this work is a direct follow-up of~\citet{sannai2019universal}. The methods and results from Theorems~\ref{thm:main} and~\ref{thm:approx} extend the content of~\citet{sannai2019universal} from the symmetric group $S_n$ to an arbitrary group. More recently,~\citet{blumsmith2023machine} explain how to build equivariant maps from invariant polynomials using invariant theory. Finally, we shall mention that invariant/equivariant deep learning is now a fast growing field with a plethora of various architectures. Some researchers have recently proposed to unify most of existing approaches through a general framework known as Geometric Deep Learning~\citep{bronstein2021geometric}.

\paragraph{Symmetries and universal approximation} Universal Approximation property is fundamental classical deep learning, extensively studied since the pioneer works by~\citet{cyb,hornik1989multilayer,funahashi1989approximate,barron1994approximation,kurkova1992,pinkus1999approximation} and more and further explored by~\citet{sonoda2017neural,hanin2017approxrelu,hanin2019univRelu}.  Concerning invariant architecture,~\citet{yarotsky2018universal} observed that it is straightforward to build a universal invariant architecture from universal classic architecture just by group averaging when the group is finite. However, this is unfeasible for large groups such as $S_n$. Additionally, this issue was recently addressed by~\cite{sannai2024reynold} who showed that, it can sometimes be enough to average over a subset of the group rather that the entire group. By doing so, they are, for instance, able to reduce the averaging operation complexity from $O(n!)$ to $O(n^2)$ in the case of graph neural networks.~\citet{deepsets} prove universality of DeepSets via a representation theorem from~\citet{kolmogorovrepresentation} and~\citet{arnoldrepresentation} coupled with a sum decomposition. The effectiveness of this universal decomposition for continuous functions on sets is discussed by~\citet{wagstaff22universal}, who argue that the underlying latent dimension must be high-dimensional. Recently,~\citet{tabaghi2023universal} improved the universality result from~\citet{deepsets}. Some generalizations to other groups have been proposed by~\citet{maron2019universality} and~\citet{yarotsky2018universal}, and universality of some invariant graph neural networks was proved by~\citet{maron2019universality} and~\citet{keriven2019universal}.  

In the case of equivariant networks,~\cite{keriven2019universal,segol2019universal,ravanbakhsh2020universal} established universality results for finite groups using high order tensors. More recently,~\citet{dym2021on} suggested a universal architecture for rotation equivariance and~\citet{yarotsky2018universal} for the semi-direct product of $\RR^n$ and $\SO_n(\RR)$ (translation plus rotation). In this paper, we propose circumventing the challenge of directly addressing equivariant universality by constructing equivariant universal architectures directly from invariant networks, known to be universal, via our decomposition theorem. The resulting architecture differs from others in the literature, as explained in more details in Remarks~\ref{rem:difference-of-models actions} and~\ref{rem:difference-of-models parameters}.

\section{Preliminaries}\label{sec:inv-eq-maps}  

In this section, we review some notions of group actions and introduce invariance/equivariance for maps. 
In Appendix~\ref{sec:groups-actions}, we summarize some necessary notions and various examples for groups. 

For sets $X$ and $V$, we consider the set $\Map(X, V)=V^X = \{f \colon X \to V \}$ of maps from $X$ to $V$. 
In many situations, we regard $X$ as an index set and $V$ as a set of objects (such as channels or pictures). We show some examples. 

\begin{example}\label{ex:picutures}
\begin{enumerate}
	\item[(i)] If $X = \{1, 2, \dots, n \}$ and $V = \RR$, then $\Map(X,V)$ is idenfied with $\RR^n$ by $\Map(X,V)\to \RR^n \colon f\mapsto (f(1), f(2), \dots, f(n))^\top$. 
	
	\item[{\rm (ii)}] Let $\ell, m$ be positive integers, and $X=\{ 1, 2, \dots, \ell \} \times \{ 1, 2, \dots, m \}$ and $V = \{ 0, 1, \dots, 255 \}^3$. Then, $\Map(X,V)$ can be regarded as the set of digital images of $\ell \times m$ pixels with the RGB color channels. 
For $f \in \Map(X,V)$ and $(i,j) \in X$, $f(i,j) = (r_{ij}, g_{ij}, b_{ij} )^\top$ ($r_{ij}, g_{ij}, b_{ij} \in \{1,2,\dots, 255 \}$) represents RGB color at $(i,j)$-th pixel.  
 \item[{\rm (iii)}] Let $V'\subset V^X$ be a set of some digital images as in Example~\ref{ex:picutures}(ii) and $X' = \{1, 2, \dots, n \}$. Then, $V'^{X'}$ is the set of $n$-tuples of the digital images in $V'$. 
 \item[{\rm (iv)}] For $X = \RR^2$ and $V=\RR^3$, $\Map(\RR^2, \RR^{3}) = (\RR^3)^{\RR^2}$ can be regarded as the space of ``ideal'' images. Here, the ``ideal'' means that the size of the image is ``infinitely extended'' and the pixels of the image are ``infinitely detailed'' in the sense of \citet{yarotsky2018universal}. This is a similar notion to the set $L^2(\RR^\nu, \RR^m)$ of the ``signals'' introduced in \citet[Section~3.2]{yarotsky2018universal}. 
\end{enumerate}
\end{example}

Next, we consider a group action on the set $V^X$. Let $G$ be a group and $X$ be a $G$-set, i.e., a set on which $G$ acts from the left. 
Then, $G$ also acts on $V^X$ from the left\footnote{
Acting from the left means that for any $\sigma, \tau\in G$, the formula $\sigma\cdot \left( \tau\cdot f \right) = \left( \sigma \tau \right) \cdot f$ holds. 
To ensure this, we must use $\sigma^{-1}$, rather than $\sigma$, on the right-hand part of~\eqref{eq:action on V^X}. 
Otherwise we would have $\sigma\cdot \left( \tau\cdot f \right) = \left( \tau \sigma \right) \cdot f$.

}
for any set $V$ by  
\begin{equation}\label{eq:action on V^X} 
 (\sigma\cdot f) (x) = f(\sigma^{-1}\cdot x) \,,
\end{equation}
for $f \in V^X$ and $\sigma \in G$. 
For $f\in V^X$, let $O_f$ be the $G$-orbit $G\cdot f = \{ \sigma\cdot f \mid \sigma \in G \}$. 
A few examples of such type of action are listed below.
\begin{example}\label{ex:action on V^X}
\begin{enumerate}
	\item[{\rm (i)}]  The permutation group $S_n$ of the set $X = \{1, 2, \dots , n \}$ acts on $X$ by permutation. Any unordered set $\{v_1, v_2, \dots, v_n \}$ of $n$ elements of $V$ can be regarded as an $S_n$-orbit of the ordered $n$-tuple $(v_1, v_2, \dots, v_n) \in V^X$ for $X = \{1,2, \dots, n \}$.
	\item[{\rm (ii)}] The $S_n$-orbit of an element $f$ in $\Map(\{ 1,2, \dots, n\}, \RR^3)$ can be regarded as a point cloud consisting of $n$ points in $\RR^3$.    

\item[{\rm (iii)}] Let $X = \{1, 2, \dots, n \}^2$ and $V = \RR$. We consider the diagonal $S_n$-action $\sigma\cdot(i,j) = (\sigma(i), \sigma(j))$ ($\sigma \in S_n$) on $X$. 
Then, $S_n$ also acts on $V^X = \RR^{n\times n}$. 
Let $\Sym(n)$ be the subset of symmetric matrices in $\RR^{n\times n}$. 
This $\Sym(n)$ is stable by the diagonal action of $S_n$. 
The $S_n$-orbit $S_n\cdot A$ of an $A\in \Sym(n)\subset \RR^{n\times n}$ can be regarded as an isomorphism class of undirected weighted graph. 
Indeed, $\Sym(n)$ is the set of adjacency matrices of undirected weighted graphs, and two graphs are isomorphic if and only if the corresponding adjacency matrices $A_1, A_2$ satisfy $A_1 = \sigma\cdot A_2$ for an element $\sigma \in S_n$. 
\end{enumerate}
\end{example}

We define invariant and equivariant maps as follows: 
\begin{defn}
Let $G$ be a group, $X, Y$ be two $G$-sets, and $V, W$ be two sets. 
Then, a map $F$ from $V^X$ to $W$ is $G$-{\em invariant} if 
 $F$ satisfies $ F(\sigma\cdot f) = F(f)$  for every $\sigma \in G$ and $f \in V^X$. 
A map $F$ from $V^X$ to $W^Y$ is 
 $G$-{\em equivariant} if $F$ satisfies 
 $ F(\sigma\cdot f) = \sigma \cdot F(f)$ 
 for every $\sigma \in G$ and $f \in V^X$. 
 We denote 
\begin{equation*}
 \Inv_{G}(V^X, W)= \{F \colon V^X \to W \mid G\text{-invariant} \}\,,
\end{equation*}
and
\begin{equation*}
 \Equiv_{G}(V^X, W^Y)= \{F \colon V^X \to W^Y \mid G\text{-equivariant} \}\,.
\end{equation*}
\end{defn}
Some examples of invariant or equivariant tasks are the following

\begin{example}\label{ex:inv-eq-tasks}
\begin{enumerate}
	\item[(i)] Let $X = Y = \{1, 2, \dots, n \}$ and $V=\RR^3$. The classification task of point clouds can be regarded as a task to find an appropriate $S_n$-invariant map from $V^X$ to a set of classes $W=\{c_1, c_2, \dots, c_m \}$.  
	\item[(ii)] A task of anomaly detection from $n$ pictures is a permutation equivariant task as in \citet[Appendix~I]{deepsets}. This is to find an appropriate $S_n$-equivariant map $V^X$ to $W^Y$ for $V = (\{ 0, 1, \dots, 255 \}^3)^{\ell\times m}$, $W = \{0, 1 \}$, and $X=Y= \{1, 2, \dots, n \}$. 
	\item[(iii)] A task of classification of digital images of $\ell\times\ell$ pixels is finding an appropriate 90-degree rotation invariant map $V^X$ to a set of classes $W=\{ c_1, c_2, \dots, c_m \}$, where $V = \{ 0, 1, \dots, 255 \}^3$ and $X = \{1, 2, \dots, \ell \}^2$. An image segmentation task can be regarded as finding an appropriate 90-degree rotation equivariant map $V^X$ to $W^Y$ for the same $V, W, X$ as Example~\ref{ex:inv-eq-tasks}~(ii) and $Y = \{1, 2, \dots, \ell \}^2$. 
	\item[(iv)] An extension of Example~\ref{ex:inv-eq-tasks}~(iii) to ``ideal images'' is finding an $\SO_2(\RR)$-invariant map from $V^X$ to $W$ (or $\SO_2(\RR)$-equivariant map from $V^X$ to $W^Y$) for and $X = \RR^2$, $V=\RR^3$, and $W=\RR$ (or $X=Y = \RR^2$, $V=\RR^3$, and $W=\RR$ respectively). 
\end{enumerate}
\end{example}

\section{Invariant-equivariant relation and universal approximation}\label{sec:principle}

\subsection{Warm up example with $S_n$}\label{sec:warmup example S_n}
We begin this section with a warm-up example. In this subsection, $G=S_n$, $X=\{1,\dots,n\}$ and $V,W$ are generic sets. We denote as $f$ an element of $V^X$. The action of $S_n$ on $V^X$ is as in~\eqref{eq:action on V^X}
\begin{equation}\label{eq:action of S_n on V^X}
    (\sigma\cdot f)(i) = f(\sigma^{-1}(i)),\ \forall\ 1\le i\le n\,,
\end{equation}
and so is the action of $S_n$ on $W^X$.
Let $F$ be a map from $V^X$ to $W^X$, then there exist $n$ maps $F_1,\dots,F_n$ from $V^X$ to $W$ such that
\begin{equation}\label{eq:warmup ex eq 1}
    F(f)(i) = F_i(f),\ \forall\ 1\le i\le n\,.
\end{equation}
$F_i$ is just a shorter notation for the map $F(\cdot)(i)$. Now let us make the assumption that $F$ belongs to $\Equiv_{S_n}(V^X, W^X)$, the equivariance property $F(\sigma\cdot f) = \sigma\cdot (F(f))$ implies for all $i$,
\begin{equation}\label{eq:warmup ex eq 2}
 F_i(\sigma\cdot f)=F_{\sigma^{-1}(i)}(f)\,.  
\end{equation}
Let us focus on $i=1$. Notice that if $\sigma$ is a permutation such $\sigma(1)=1$, it is straightforward from~\eqref{eq:warmup ex eq 2} that $F_1(\sigma\cdot f) = F_1(f)$ for all $f\in V^X$. Since it is fairly known that the set of permutations such that $\sigma(1)=1$ is a subgroup of $S_n$, denoted as $\Stab_{S_n}(1)$, (this group is isomorphic to $S_{n-1}$), we deduce from~\eqref{eq:warmup ex eq 2} that 
\begin{equation}\label{eq:warmup ex eq 3}
    F_1 = F(\cdot)(1) \in \Inv_{\Stab_{S_n}(1)}(V^X, W)\,.
\end{equation}
Moreover, if we let $(1\ i)$ be the transposition that exchanges $1$ and $i$ and fix any other $j\in X$, we obtain easily from~\eqref{eq:warmup ex eq 2} that for all $f$
\begin{equation}\label{eq:warmup ex eq 4}
    F_1((1\ i)\cdot f) = F_{(1\ i)\cdot 1}(f) = F_i(f)\,.
\end{equation}
To sum up, we have almost proven the following claim.
\begin{prop}\label{prop:main result the case of S_n}
    $F\in \Equiv_{S_n}(V^X, W^X)$ if and only if there exists $F_1 \in \Inv_{\Stab_{S_n}(1)}(V^X, W)$ such that for all $f\in V^X$, for $i=1,\dots,n$,
    \begin{equation}\label{eq:eq proposition S_n}
        F(f)(i) = F_1((1\ i)\cdot f)\,.
    \end{equation}
\end{prop}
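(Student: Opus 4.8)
The forward implication is in fact already settled by the computation preceding the statement. The plan is to set $F_1 := F(\cdot)(1)$; then \eqref{eq:warmup ex eq 3} shows that $F_1\in\Inv_{\Stab_{S_n}(1)}(V^X,W)$, while \eqref{eq:warmup ex eq 4} gives $F(f)(i)=F_i(f)=F_1((1\ i)\cdot f)$ for every $i\ge 2$. The remaining case $i=1$ is the trivial identity $F(f)(1)=F_1(f)=F_1((1\ 1)\cdot f)$, reading $(1\ 1)$ as the identity permutation. Together these yield \eqref{eq:eq proposition S_n}.

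For the converse I would start from an arbitrary $F_1\in\Inv_{\Stab_{S_n}(1)}(V^X,W)$, define $F$ through the formula \eqref{eq:eq proposition S_n}, and check directly that $F\in\Equiv_{S_n}(V^X,W^X)$. Comparing the two sides of $F(\sigma\cdot f)=\sigma\cdot F(f)$ coordinate by coordinate, and using the left-action identity $(1\ i)\cdot(\sigma\cdot f)=\bigl((1\ i)\sigma\bigr)\cdot f$ on the left-hand side together with the action rule $(\sigma\cdot F(f))(i)=F(f)(\sigma^{-1}(i))$ on the right-hand side, the whole statement reduces to the single identity
\begin{equation*}
    F_1\bigl(((1\ i)\sigma)\cdot f\bigr)=F_1\bigl((1\ \sigma^{-1}(i))\cdot f\bigr)\,,
\end{equation*}
required to hold for all $i$ and all $f$.

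The crux of the argument, and the only step that is not bookkeeping, is to exhibit these two arguments of $F_1$ as lying in a common $\Stab_{S_n}(1)$-orbit. Writing $j=\sigma^{-1}(i)$, so that $\sigma(j)=i$, one has $((1\ i)\sigma)\cdot f=\tau\cdot((1\ j)\cdot f)$ with $\tau:=(1\ i)\,\sigma\,(1\ j)$, since $(1\ j)(1\ j)$ is the identity. It then suffices to verify that $\tau\in\Stab_{S_n}(1)$, after which the invariance of $F_1$ closes the gap. Tracing the image of $1$ under $\tau$ from right to left gives $1\mapsto j\mapsto\sigma(j)=i\mapsto 1$, so indeed $\tau(1)=1$. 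The only subtlety is the degenerate situation $j=1$ or $i=1$, where one transposition collapses to the identity; a quick case inspection confirms that $\tau(1)=1$ persists in each. I expect this orbit-matching verification, rather than the coordinatewise manipulation, to be the main obstacle.
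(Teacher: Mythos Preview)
Your proposal is correct and follows the same approach as the paper: the forward direction is exactly the preceding discussion, and for the converse the paper merely says one ``easily verifies'' equivariance, whereas you supply the explicit stabilizer computation $\tau=(1\ i)\sigma(1\ j)\in\Stab_{S_n}(1)$, which is precisely the specialization to $S_n$ of the well-definedness argument the paper later gives in full in the proof of Theorem~\ref{thm:main}.
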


\begin{proof}
    The discussion above is a proof of the nontrivial implication. Reciprocally, one easily verifies that picking $F_1 \in \Inv_{\Stab_{S_n}(1)}(V^X, W)$ and defining $F: V^X \to W^X$ by 
    \begin{equation*}
       F(f)(i) = F_1((1\ i)\cdot f\,,
    \end{equation*}
    yields a map $F\in\Equiv_{S_n}(V^X, W^X)$.
\end{proof}
The rest of this section is dedicated to a generalization of this result to a generic group $G$.

\subsection{Invariant-equivariant relation}

Our main theorem is a relation between a $G$-equivariant maps and some invariant maps for some subgroups of $G$.
Recall that, for any $y$ in a $G$-set $Y$, its stabilizer, denoted as $\Stab_G(y)$, is the subgroup of all the $\sigma \in G$ such that $\sigma\cdot y=y$
(see the recall en groups and group actions in Appendix~\ref{sec:groups-actions}).

\begin{theorem}\label{thm:main}
 Let $G$ be a group and $X, Y$ be two $G$-sets. Let $V, W$ be two sets. 
 Let $ Y = \bigsqcup_{i \in I} O_{y_i}$ be the $G$-orbit decomposition of $Y$. 
 We fix a system of representatives $\{ y_i \in Y \mid i\in I \}$.  
Then, the following map is bijective: 
\begin{equation*}
	\declareFunction{\Phi}{\Equiv_G(V^X, W^Y)}{\prod_{i \in I}\Inv_{\Stab_G(y_i)}(V^X, W)}{F}{\Phi(F) = (F(\cdot)(y_i))_{i \in I\,.}}
\end{equation*}

Moreover, its inverse map 
\begin{equation*}
		\declareFunction{\Psi}{\prod_{i \in I} \Inv_{\Stab_G(y_i)}(V^X, W)}{\Equiv_G(V^X, W^Y)}{(\wt{F}_i)_{i \in I}}{\function{\Psi}{(\wt{F}_i)_{i \in I}}\,, }
\end{equation*}
is defined by 
\begin{align}
	 \function{\Psi}{(\wt{F}_i)_{i \in I}}(f)(y) = \wt{F}_i (\sigma\cdot f)\,, \label{eq:def-of-psi}
\end{align}
for any $f\in V^X$ and $y \in Y$ such that $y \in O_{y_i}$ and $y = \sigma^{-1}\cdot y_i$ for some $\sigma\in G$. 

In addition, if $V$ and $W$ are vector spaces over $\RR$, then $V^X$, $W^Y$, $\Equiv_G(V^X, W^Y)$, and $\prod_{i\in I} \Inv_{\Stab_G(y_i)}(V^X, W)$  are also vector spaces over $\RR$ and $\Phi$ and $\Psi$ are $\RR$-linear isomorphisms. 
\end{theorem}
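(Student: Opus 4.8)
The plan is to follow the same strategy as in the $S_n$ warm-up (Proposition~\ref{prop:main result the case of S_n}): verify in turn that $\Phi$ lands in the product of invariant spaces, that the proposed inverse $\Psi$ is well defined, that $\Psi$ produces equivariant maps, and that $\Phi$ and $\Psi$ are mutually inverse; the vector-space statement then follows formally. First I would check that $\Phi$ is well defined, i.e. that each component $F(\cdot)(y_i)$ is genuinely $\Stab_G(y_i)$-invariant. For $\sigma \in \Stab_G(y_i)$, equivariance gives $F(\sigma \cdot f)(y_i) = (\sigma \cdot F(f))(y_i) = F(f)(\sigma^{-1} \cdot y_i)$ by~\eqref{eq:action on V^X}, and since the stabilizer is a subgroup we have $\sigma^{-1} \cdot y_i = y_i$, so this equals $F(f)(y_i)$. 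Hence $F(\cdot)(y_i) \in \Inv_{\Stab_G(y_i)}(V^X, W)$, and $\Phi$ is well defined.

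The crux of the argument, and the step I expect to be the main obstacle, is showing that $\Psi$ is well defined, because the formula~\eqref{eq:def-of-psi} depends a priori on the choice of $\sigma$ writing $y = \sigma^{-1} \cdot y_i$. Existence of some such $\sigma$ for every $y$ is immediate from the orbit decomposition, so the real issue is uniqueness of the value. If $y = \sigma^{-1} \cdot y_i = \tau^{-1} \cdot y_i$, then $\tau \sigma^{-1} \in \Stab_G(y_i)$, and using the $\Stab_G(y_i)$-invariance of $\wt{F}_i$ one obtains $\wt{F}_i(\tau \cdot f) = \wt{F}_i\bigl((\tau\sigma^{-1}) \cdot (\sigma \cdot f)\bigr) = \wt{F}_i(\sigma \cdot f)$. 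This is precisely where the invariance hypothesis is used, and it is the whole reason the stabilizer subgroups appear in the statement. I would then verify that $F := \Psi\bigl((\wt{F}_i)_{i\in I}\bigr)$ is equivariant by a direct computation: for $y = \sigma^{-1}\cdot y_i$ and $\tau \in G$, both $F(\tau \cdot f)(y)$ and $(\tau \cdot F(f))(y)$ reduce to $\wt{F}_i\bigl((\sigma\tau)\cdot f\bigr)$, using that $\tau^{-1}\cdot y = (\sigma\tau)^{-1}\cdot y_i$.

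Finally, the two composition identities are short checks that exploit the identity element of $G$ as the distinguished $\sigma$: evaluating $\Psi\bigl((\wt{F}_i)_{i\in I}\bigr)$ at the representatives $y_j$ recovers $\wt{F}_j$, giving $\Phi \circ \Psi = \id$, while substituting $\wt{F}_i = F(\cdot)(y_i)$ into~\eqref{eq:def-of-psi} and invoking equivariance recovers $F(f)(y)$, giving $\Psi \circ \Phi = \id$. For the linear case, pointwise operations make $V^X$ and $W^Y$ into $\RR$-vector spaces on which $G$ acts linearly; the invariance and equivariance conditions are linear constraints, so the relevant sets are subspaces, and $\Phi$ is manifestly $\RR$-linear since it is given by evaluation at the points $y_i$. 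A bijective linear map is an isomorphism with linear inverse, so no further computation is needed for $\Psi$.
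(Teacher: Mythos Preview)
Your proposal is correct and follows essentially the same approach as the paper: verify well-definedness of $\Phi$ via stabilizer invariance, resolve the ambiguity in $\Psi$ using the $\Stab_G(y_i)$-invariance of $\wt{F}_i$, check equivariance of $\Psi$'s output by reducing both sides to $\wt{F}_i((\sigma\tau)\cdot f)$, verify the two composition identities, and handle the linear case by noting that $\Phi$ is evaluation (hence linear) and that a linear bijection has linear inverse. The only cosmetic difference is that for $\Phi\circ\Psi=\id$ you evaluate at $y_j$ using $\sigma=e$ directly, whereas the paper picks a generic $\tau\in\Stab_G(y_{i_0})$ and then invokes invariance once more; your version is marginally cleaner but not a different idea.
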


\begin{proof} %
To prove Theorem~\ref{thm:main}, we shall prove that 
the maps $\Phi$ and $\Psi$ are well-defined and these maps are 
the inverse maps of each other, i.e., 
$\Psi\circ \Phi$ is the identity on $\Equiv_G(V^X, W^Y)$ and 
$\Phi \circ \Psi$ is the identity on $\prod_{i\in I}\Inv_{\Stab_G(y_i)}(V^X, W)$. 

We first show the well-definedness of $\Phi$. That is, for $\Phi(F) = (F(\cdot)(y_i))_{i\in I}$, we shall prove that $F(\cdot)(y_i) \colon V^X \to W$ is $\Stab_G(y_i)$-invariant for any $i\in I$. To do so, we check that for $\sigma\in \Stab_G(y_i)$ and $f\in V^X$, $F(\sigma\cdot f)(y_i) = F(f)(y_i)$.
Let $F$ be a $G$-equivariant map from $V^X$ to $W^Y$ and $\sigma$ be an element in $\Stab_G(y_i)$. 
This implies $\sigma^{-1}\cdot y_i = y_i$, hence the inverse $\sigma^{-1}$ is also in $\Stab_G(y_i)$. 
Then, by $G$-equivariance of $F$, we have 
\begin{align*}
 F(\sigma\cdot f)(y_i) = (\sigma\cdot F)(f)(y_i) = F(f)(\sigma^{-1}\cdot y_i) = F(f)(y_i). 
\end{align*}
Hence, the map $F(\cdot)(y_i)$ is $\Stab_G(y_i)$-invariant. 
This implies that the map 
\[
	\Phi\colon \Equiv_G(V^X, W^Y) \to \prod_{i\in I} \Inv_{\Stab_G(y_i)}(V^X,W);  F \mapsto (F(\cdot)(y_i))_{i\in I}
\] is well-defined.  

Next, we prove that the map $\Psi$ is well-defined. The well-definedness of $\Psi$ can be rephrased that the image $\Psi((\wt{F}_i)_{i\in I})(f)(y) = \wt{F}_i(\tau\cdot f)$ for $\tau\in G$ such that $y = \tau^{-1}\cdot y_i$ of $(\wt{F}_i)_{i\in I} \in \prod_{i\in I} \Inv_{\Stab_G(y_i)}(V^X, W)$ is independent of the choice of $\tau\in G$ and is $G$-equivariant. We first notice that such a $\tau$ exists since, from the $G$-orbit decomposition $ Y = \bigsqcup_{i\in I} O_{y_i}$,  there is an orbit representative $y_i$ such that  $y \in O_{y_i}$, and thus $y=\tau^{-1} \cdot y_i$ for some $\tau \in G$.
If $\tau' \in G$ is another choice so that $y = \tau'^{-1}\cdot y_i$, then we have $ \tau'^{-1}\cdot y_i = y = \tau^{-1}\cdot y_i$.
This implies that $\tau \tau'^{-1}$ stabilizes $y_i$. Thus, we can represent $\tau' = \sigma \tau$ for some $\sigma \in \Stab_G(y_i)$. The value of $\wt{F}_i$ at $\tau'\cdot f$ becomes 
\[
 \wt{F}_i(\tau'\cdot f) = \wt{F}_i((\sigma\tau)\cdot f)) 
 = \wt{F}_i(\sigma\cdot (\tau\cdot f)) = 
 \wt{F}_i(\tau\cdot f). 
\] 
Here, the last equality is deduced from $\Stab_G(y_i)$-invariance of $\wt{F}_i$. 
This implies that the definition of the value $\Psi(\wt{F}_i)(f)(y)$ is independent of the choice of $\tau \in G$ satisfying $y_i = \tau\cdot y$. 

We set $F = \Psi((\wt{F}_i)_{i\in I})$. 
To show $G$-equivariance of $F$, it is sufficient to check that for any $\sigma\in G$, any $f\in V^X$, and any $y \in Y$, $ \sigma\cdot F(f)(y) = F(\sigma\cdot f)(y)$ holds. Let $y\in O_{y_i}$ and $y = \tau^{-1}\cdot y_i$ for some $\tau\in G$. 
Then, because $\sigma^{-1}\cdot y = (\sigma^{-1}\tau^{-1})\cdot y_i \in O_{y_i}$,  
\begin{align*}
 (\text{LHS}) &=\sigma\cdot F(f)(y) = F(f)(\sigma^{-1}\cdot y) = F(f)((\sigma^{-1}\tau^{-1})\cdot y_i) \\ &= F(f)((\tau\sigma)^{-1}\cdot y_i) = \wt{F}_i((\tau\sigma) \cdot f). 
\end{align*}
Here, the last equality follows from the definition of the map of $\Psi$ in   \eqref{eq:def-of-psi}. 
On the other hand, we have %
\begin{align*}
 (\text{RHS}) =  F(\sigma\cdot f)(y) = F(\sigma\cdot f)(\tau^{-1}\cdot y_i) = 
 \wt{F}_i(\tau\cdot(\sigma\cdot f)) = \wt{F}_i((\tau\sigma)\cdot f). 
\end{align*}
Here, the third equality also follows from the definition of the map of $\Psi$ in \eqref{eq:def-of-psi}.
Therefore, $F$ satisfies $\sigma\cdot F(f)(y) = F(\sigma\cdot f)(y)$ for any $\sigma \in G$, any $f\in V^X$, and any $y\in Y$. 
Hence, $F$ is $G$-equivariant. 

Finally, we show that the map $\Psi$ is the inverse map of the map $\Phi$ and vice versa, i.e., both $\Psi\circ \Phi$ and $\Phi \circ \Psi$ are identities.   
Let $F$ be a map in $\Equiv_G(V^X, W^Y)$.  
Then, the image of the map $\Phi$ can be written as $(\wt{F}_i)_{i \in I}$ such that $\wt{F}_i(f) = F(f)(y_i)$. 
Then, the map $\Psi$ takes $(\wt{F}_i)_{i\in I}$ to $F'$ such that $ F'(f)(y) = \wt{F}_i(\tau\cdot f) = F(\tau\cdot f)(y_i)$ for $y\in Y$ and $\tau\in G$ such that $y = \tau^{-1}\cdot y_i$. 
Because $F$ is $G$-equivariant, we have 
\begin{align*}
 F(f)(y) &= F(f)(\tau^{-1}\cdot y_i) = (\tau\cdot F)(f)(y_i)  \\ &= F(\tau\cdot f)(y_i) = F'(f)(y)
 = \Psi\circ\Phi(F(f))(y). 
\end{align*}
Hence, $F = F'=(\Psi\circ\Phi)(F)$ holds for any $F\in\Equiv_G(V^X, W^Y)$. 
In particular, $\Psi\circ \Phi$ is the identity. 

Conversely, $(\wt{F}_i)_{i\in I} \in \prod_{i\in I} \Inv_{\Stab_G(y_i)}(V^X, W)$ is given. 
Let $F = \Psi((\wt{F}_i)_{i\in I})$.  Then, 
the image of $F$ by $\Phi$ becomes $\Phi(F)(f)= (F(f)(y_i))_{i\in I}$. 
Now, for $i_0 \in I$, 
\[
	F(f)(y_{i_0}) = \Psi((\wt{F}_i)_{i\in I})(f)(y_{i_0}) = \wt{F}_{i_0}(\tau\cdot f)
\]
for a $\tau\in G$ satisfying $y_{i_0} = \tau\cdot y_{i_0}$. 
In particular, $\tau$ is in $\Stab_G(y_{i_0})$. By $\Stab_G(y_{i_0})$-invariance of $\wt{F}_{i_0}$, $\wt{F}_{i_0}(\tau\cdot f) = \wt{F}_{i_0}(f)$ holds. 
Thus, we have 
\begin{align*}
 (\Phi\circ\Psi)((\wt{F}_j)_{j\in I})(f) & = \Phi(\Psi((\wt{F}_j)_{j\in I})(f)) = 
 \{ \Psi((\wt{F}_j)_{j\in I})(f)(y_i) \}_{i\in I}
 \\ &= ( \wt{F}_i(f) )_{i\in I} = (\wt{F}_i)_{i\in I}(f). 
\end{align*}

Therefore, the composition $\Phi\circ\Psi$ is the identity.

Finally, we justify the final statement of the theorem in the case $V$ and $W$ are real vector spaces. Under this assumption, it is fairly known that $V^X$, $W^Y$, $\Map(V^X, W^Y)$ as well as $\Map(V^X, W)$ are real vector spaces. Therefore, $\prod_{i\in I}\Map(V^X, W)$ is also a vector space as the Cartesian product of vector spaces. By an abuse of notation, consider $\Phi$ as a map between $\Map(V^X, W^Y)$ and $\prod_{i\in I}\Map(V^X, W)$, and let us show that it is linear.

Let $F,F'\in \Map(V^X, W^X)$ and $\lambda \in \RR$. It is clear that for all $f\in V^X$, $(F+\lambda F')(f) = F(f) + \lambda F'(f)$ and thus for all $y\in Y$, $(F+\lambda F')(f)(y) = F(f)(y) + \lambda F'(f)(y)$. Hence,
\begin{align*}
\Phi(F + \lambda F') & = ((F+\lambda F')(\cdot)(y_i))_{i\in I} \\
                    & = (F(\cdot)(y_i) + \lambda F'(\cdot)(y_i))_{i\in I}\\
                    & = (F(\cdot)(y_i))_{i\in I} + \lambda (F'(\cdot)(y_i))_{i\in I},
\end{align*}
Which justifies the linearity of $\Phi : \Map(V^X, W^Y) \to \prod_{i\in I}\Map(V^X, W)$. 

Next, we shall show that $\Equiv_G(V^X, W^Y)$ is a linear subspace of $\Map(V^X, W^Y)$. In order to do that, we first have to show that the action of $F'$ on $\Map(V^X, W^Y)$ is itself linear. Let $F, F'\in \Map(V^X, W^Y)$ and $\lambda\in \RR$, for any $f\in V^X$, any $y\in Y$, and any $\sigma \in G$, we have :
\begin{align*}
    \sigma\cdot(F + \lambda F')(f)(y) & = (F(f) + \lambda F'(f))(\sigma^{-1}\cdot y)\\
                                     & = F(f)(\sigma^{-1}\cdot y) + \lambda F'(f)(\sigma^{-1}\cdot y)\\
                                     & = \sigma\cdot F(f)(y) + \lambda \sigma\cdot F'(f)(y),
\end{align*}
Which yields $\sigma\cdot(F + \lambda F') = \sigma\cdot F + \lambda \sigma \cdot F'$. That being done, we go back on showing that  $\Equiv_G(V^X, W^Y)$ is a linear subspace of $\Map(V^X, W^Y)$. Let  $F, F'\in \Equiv_G(V^X, W^Y)$ and $\lambda\in \RR$, for any $\sigma\in G$, we have 
\begin{align*}
    \sigma\cdot (F + \lambda F')(f) & = (\sigma \cdot F + \lambda \sigma \cdot F')(f)\\
                                   & =  \sigma \cdot F(f) + \lambda \sigma \cdot F'(f)\\
                                   & =  F(\sigma\cdot f) + \lambda F'(\sigma\cdot f)\\
                                   & = (F + \lambda F')(\sigma\cdot f),
\end{align*}
where the third equality is due to the $G$-equivariance property. Hence, since the null map is clearly equivariant too, this makes $\Equiv_G(V^X, W^Y)$ a real vector subspace of $\Map(V^X, W^Y)$.

Consequently, the image of $\Equiv_G(V^X, W^Y)$ by $\Psi$ must be a vector subspace of $\prod_{i\in I}\Map(V^X, W)$. Since we have proven earlier that $\Phi : \Equiv_G(V^X, W^Y) \to \prod_{i\in I}\Inv_{\Stab_G(y_i)}(V^X, W)$ is a bijection, in particular, it is a surjection. Hence, the aforementioned image is $\prod_{i\in I}\Inv_{\Stab_G(y_i)}(V^X,W)$, therefore it is a vector space.

To conclude, notice that we have shown that $\Phi : \Equiv_G(V^X, W^Y) \to \prod_{i\in I}\Inv_{\Stab_G(y_i)}(V^X, W)$ is a linear bijection between vector spaces. Thus, by a fairly known fact from linear algebra, its inverse $\Psi$ must be linear too.
\end{proof}

Theorem~\ref{thm:main} implies that any $G$-equivariant map $F$ is determined by the $\Stab_G(y_i)$-invariant maps $F(\cdot)(y_i)$ for $i \in I$. 
In other words, the parts  $F(\cdot)(y)$ for $y \not\in \{ y_i \mid i \in I \}$ are redundant to construct $G$-equivariant map $F$. 

This theorem is quite elementary, in the sense that its statement as well as its proof, only requires basic knowledge in group theory.
However, the idea of linking $G$-equivariant maps to invariant maps on some subgroups of $G$ is not new and has more profound implications. 
For instance, it is central in group representation theory as it relates to the so-called Frobenius reciprocity theorem~\citep{curtis1966representation}, about the relationships between the representation of $G$ and the representations of its subgroups.
In addition, the result from Theorem~\ref{thm:main} can be recovered from some other more general theorems in more advanced topics, such as Theorem 1.1.4 in~\cite{cap2009parabolic}, about homogeneous vector bundles.

\subsection{Construction of universal approximators}
As an application of Theorem~\ref{thm:main}, we construct universal approximators for continuous $G$-equivariant maps. 
In this section, we assume that $V$ and $W$ are normed vector spaces over $\RR$ whose norms are $\| \cdot\|_V$ and $\| \cdot \|_W$ respectively. Then, we define norms on $V^X$ and $W^Y$ by the supremum norms
\begin{equation}
    \| f \|_{V^X} = \sup_{x\in X} \| f(x) \|_V.
\end{equation}
for $f\in V^X$ and similarly for $g\in W^Y$.
Let $K\subset V^X$ be a compact subset. We assume that $K$ is stable by the action of $G$, i.e., $\sigma\cdot f \in K$ for any $\sigma \in G$ and $f \in K$. 
We denote $\Equiv_G^{\cont}(K, W^Y)$ (resp. $\Inv_H^{\cont}(K, W)$ for a subgroup $H$) the subset of continuous maps in $\Equiv_G(K,W^Y)$ (resp. $\Inv_H(K, W)$): 
\begin{align*}
 \Equiv_G^{\cont}(K, W^Y) &= \{ F\in \Equiv_G(K, W^Y) \mid 
 \text{continuous} \} \\ 
 \Inv_H^{\cont}(K, W) &= \{ \wt{F} \in \Inv_H(K, W) \mid \text{continuous} \} 
\end{align*}
For a subgroup $H\subset G$, 
let $\ca{H}_{H\text{-inv}}$ be a hypothesis set in $\Inv_H^{\cont}(K, W)$ consisting of some deep neural networks. 
We assume that any maps in $\Inv_H^{\cont}(K,W)$ can be approximated by an element in $\ca{H}_{H\text{-inv}}$, \textit{i.e.}, for any $\wt{F} \in \Inv_H^{\cont}(K,W)$ and any $\varepsilon>0$, there is an element $\wh{\wt{F}} \in \ca{H}_{H\text{-inv}}$ such that 
$$\sup_{f\in K}\| \wh{\wt{F}}(f) - \wt{F}(f) \|_W <\varepsilon\,.$$ 
We define a hypothesis set $\ca{H}_{G\text{-equiv}}$ in $\Equiv_G(K,W^Y)$ by 
aggregating the $W$-valued hypothesis sets $\ca{H}_{\Stab_G(y_i)\text{-inv}}$ as 
follows: 
\begin{align}
 \ca{H}_{G\text{-equiv}} = \Psi\left( \prod_{i \in I} \ca{H}_{\Stab_G(y_i)\text{-inv}} \right), 
 \label{eq:aggregate}
\end{align}
where $\Psi$ is the map defined in Theorem~\ref{thm:main}. 
Then, the set $\ca{H}_{G\text{-equiv}}$ is included in $\Equiv_G^{\cont}(K,W^Y)$.  
Indeed, by Theorem~\ref{thm:main}, this set is included in $\Equiv_G(K,W^Y)$ and we can show that $F=(F_y)_{y\in Y}\colon K \to W^Y$ is continuous if and only if $F_y\colon K \to W$ is continuous for all $y\in Y$ because we consider the supremum norm on $W^Y$.

The following theorem constructs a universal approximator for $G$-equivariant map from some invariant universal approximators:  
\begin{theorem}\label{thm:approx}
Any element in $\Equiv_G^{\cont}(K, W^Y)$ can be approximated by 
an element of $\ca{H}_{G\text{-equiv}}$ defined in~\eqref{eq:aggregate}. 
 More precisely, for any $F \in \Equiv_G^{\cont}(K, W^Y)$ and 
 any $\varepsilon> 0$, there exists an element in $\wh{F} \in \ca{H}_{G\text{-equiv}}$ such that 
 \[
  \sup_{f \in K} \| \wh{F}(f) - F(f) \|_{W^Y} \leq \varepsilon. 
 \]
\end{theorem}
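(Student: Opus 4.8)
The plan is to transport the equivariant approximation problem to the invariant setting through the correspondence $\Phi, \Psi$ of Theorem~\ref{thm:main}, apply the invariant universal approximation hypothesis separately on each factor $\ca{H}_{\Stab_G(y_i)\text{-inv}}$, transport back via $\Psi$, and then check that the resulting error is controlled \emph{uniformly} in the supremum norm on $W^Y$. The heart of the matter is that this last uniformity is free once we exploit the $G$-stability of $K$.

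First I would take $F\in \Equiv_G^{\cont}(K, W^Y)$ and set $\wt{F}_i := F(\cdot)(y_i)$, the $i$-th component of $\Phi(F)$. By Theorem~\ref{thm:main}, each $\wt{F}_i$ is $\Stab_G(y_i)$-invariant, and it is continuous because evaluation at a fixed $y_i$ is $1$-Lipschitz for the supremum norm on $W^Y$ (as already observed just before the statement), so $\wt{F}_i\in \Inv_{\Stab_G(y_i)}^{\cont}(K, W)$. For each $i\in I$ I would invoke the standing invariant approximation assumption, with the \emph{same} $\varepsilon$ across all $i$, to obtain $\wh{\wt{F}}_i\in \ca{H}_{\Stab_G(y_i)\text{-inv}}$ with $\sup_{g\in K}\| \wh{\wt{F}}_i(g) - \wt{F}_i(g)\|_W < \varepsilon$. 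I then define $\wh{F} := \Psi((\wh{\wt{F}}_i)_{i\in I})$, which lies in $\ca{H}_{G\text{-equiv}}$ by~\eqref{eq:aggregate}.

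The key computation is pointwise in $y$. For $f\in K$ and $y\in Y$, write $y = \tau^{-1}\cdot y_i$ with $y\in O_{y_i}$. On one hand, the definition~\eqref{eq:def-of-psi} of $\Psi$ gives $\wh{F}(f)(y) = \wh{\wt{F}}_i(\tau\cdot f)$. On the other hand, the $G$-equivariance of $F$ yields $F(f)(y) = F(f)(\tau^{-1}\cdot y_i) = F(\tau\cdot f)(y_i) = \wt{F}_i(\tau\cdot f)$, exactly as in the $\Psi\circ\Phi$ computation inside the proof of Theorem~\ref{thm:main}. Hence $\|\wh{F}(f)(y) - F(f)(y)\|_W = \|\wh{\wt{F}}_i(\tau\cdot f) - \wt{F}_i(\tau\cdot f)\|_W$. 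A minor point I would verify along the way is that this quantity does not depend on the chosen representative $\tau$, which follows from the $\Stab_G(y_i)$-invariance of both $\wh{\wt{F}}_i$ and $\wt{F}_i$, just as in the well-definedness argument for $\Psi$.

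I expect the only real obstacle to be the uniformity forced by the supremum norm on $W^Y$: the bound must hold over all $y\in Y$ at once, even when $Y$ is infinite, and the argument $\tau\cdot f$ varies with $y$. This is precisely where the hypothesis that $K$ is $G$-stable is essential: since $\tau\cdot f\in K$, the point $\tau\cdot f$ stays inside the domain on which the invariant approximation bound is uniform, giving $\|\wh{\wt{F}}_i(\tau\cdot f) - \wt{F}_i(\tau\cdot f)\|_W < \varepsilon$ for every $y$ and every representative $\tau$. Taking the supremum over $y\in Y$ (that is, over all orbits $i\in I$ and all $\tau$) and then over $f\in K$ gives $\sup_{f\in K}\|\wh{F}(f) - F(f)\|_{W^Y} \le \varepsilon$, which is the claim.
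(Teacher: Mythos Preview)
Your proposal is correct and follows essentially the same route as the paper: apply $\Phi$ to $F$, approximate each $\wt{F}_i$ invariantly to accuracy $\varepsilon$, aggregate via $\Psi$, and then use the $G$-stability of $K$ to pass from the pointwise bound at $\tau\cdot f$ back to a uniform bound over all $y\in Y$. The only cosmetic difference is that the paper invokes the equivariance of both $\wh{F}$ and $F$ to rewrite $\wh{F}(f)(y)$ and $F(f)(y)$ as $\wh{F}(\sigma\cdot f)(y_i)$ and $F(\sigma\cdot f)(y_i)$, whereas you reach the same expressions by unfolding $\Psi$ directly; your added remarks on continuity of $\wt{F}_i$ and independence from the choice of $\tau$ are harmless refinements.
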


\begin{proof} %
By the definition of the hypothesis set
$\ca{H}_{\Stab_G(y_i)\text{-inv}}$, for any $\varepsilon>0 $ and any 
$i\in I$, there is an element $\wh{\wt{F}}_i \in \ca{H}_{\Stab_G(y_i)\text{-inv}}$ such that 
\begin{align}
 \sup_{f \in K}\| \wh{\wt{F}}_i(f) - \wt{F}_i(f) \|_W \leq \varepsilon \label{eq:sup-ineq}  
\end{align}
We set $\wh{F} = \Psi( (\wh{\wt{F}}_i)_{i\in I})$. 
The map $F$ can also be written as $F = \Psi( (\wt{F}_i)_{i\in I} ) $. 
Then, the norm of the difference between $\wh{F}$ and $F$ for $f \in K$ at $y\in Y$  becomes as follows: 
If $y \in O_{y_i}$ and $y = \sigma^{-1}\cdot y_i$ for $\sigma \in G$, then 
\begin{align*}
 \| \wh{F}(f)(y) - F(f)(y) \|_W &= 
 \| \wh{F}(f)(\sigma^{-1}\cdot y_i) - F(f)(\sigma^{-1}\cdot y_i)\|_W \\ 
 & =  \| \wh{F}(\sigma\cdot f)(y_i) - F(\sigma\cdot f)(y_i)\|_W \\ 
 &= \| \wh{\wt{F}}_i(\sigma\cdot f) - \wt{F}_i(\sigma\cdot f)\|_W \\ 
 & \leq \sup_{f\in K} \| \wh{\wt{F}}_i(\sigma\cdot f) - \wt{F}_i(\sigma\cdot f) \|_W \\
& \leq \sup_{f \in K} \| \wh{\wt{F}}_i(f) - \wt{F}_i(f) \|_W \leq \varepsilon
\end{align*}
The second inequality follows from the stability of $K$ by the action of $G$, and the last inequality follows from inequality \eqref{eq:sup-ineq}.

Hence, the difference between $\wh{F}$ and $F$ at $f\in K$ 
can be written as 
\begin{align}
 \| \wh{F}(f) - F(f) \|_{W^Y}
  = \sup_{y\in Y} \| \wh{F}(f)(y) - F(f)(y) \|_W \leq \varepsilon. \label{eq:bound-of-supnorm-Y}
\end{align}
This completes the proof because $f\in K$ is arbitrary in the above argument. 
\end{proof}

The conclusion of this Theorem relies on the assumption that some universal classes $\ca{H}_{H\text{-inv}}$ of invariant maps do exist for the subgroups of $G$.
We argue that this assumption is always fulfilled. 
Indeed, given any universal class with \textit{a priori} no symmetry, for instance, usual multi-layers perceptrons, one can always turn it into a universal class of invariant maps by simple group averaging, as remarked by~\cite{yarotsky2018universal}.

However, such straightforward constructions are clearly unrealistic for large groups like $S_n$. The interest of Theorem~\ref{thm:approx} is to enable reducing the study of efficient $G$-equivariant universal architecture to the study of $\Stab_G(y_i)$-invariant ones.
It offers an alternative to the equivariant or invariant ``symmetrization-based'' constructions in \citet[Section~2.1]{yarotsky2018universal} by group averaging.

\subsubsection{On the structural characteristics of universal equivariant networks arising from Theorem~\ref{thm:approx}}

\paragraph{On the $G$-action between the hidden layers.}
Several works have studied the design of multi-layer universal equivariant architectures, including \citet{deepsets},  \citet{maron2019universality}, \citet{segol2019universal}, and \citet{ravanbakhsh2020universal}. How do a universal architecture arising from our Theorem~\ref{thm:approx}, differ from the already existing aforementioned ones?

The main difference resides in the nature of the $G$-action in the hidden layers.
In fact the way our architecture is built involves a new $G$-action, noted ``$\ast$'', which is different from the original action ``$\cdot$''.
We provide below an explanation of this phenomenon in the form of a remark. 
The rigorous demonstrations are left to Appendix~\ref{sec:Theoretical explanation of Remark 1} and require some background in group Representation Theory.

\begin{remark}\label{rem:difference-of-models actions}

Say that we seek to design multi-layer architectures that are equivariant to some $G$-action ``$\cdot $''.
The usual strategy, as in the case of the previously mentioned works, is to focus on designing a single layer block that is equivariant for ``$\cdot $''.
Then by stacking these blocks, we obtain an equivariant multi-layer architecture, because equivariance is preserved by composition.

Now, consider the scenario of our construction from Theorem~\ref{thm:approx}.
Assume that we are given some families of multi-layers $H_i$-invariant maps, for the action ``$\cdot$'', where the $H_i\subset G$ are some subgroups that have been identified thanks to the decomposition from Theorem~\ref{thm:main}. 
Then, by ``aggregating'' some of these $H_i$-invariant maps, and applying $\Psi$, as is~\eqref{eq:aggregate}, a new multi-layer architecture $\widehat{F}$ is built.
Moreover, by Theorem~\ref{thm:main} again, the $\widehat{F}$ obtained by this procedure are guaranteed to be equivariant for the action ``$\cdot$''.

What can be said about the layer-wise structure of such $\widehat{F}$? 
It turns out that the $G$-equivariance of the hidden layers is no longer realized by ``$\cdot$'' on both the input and the first hidden space. 
Instead, it involves another $G$-action ``$\ast$'' in the hidden space. 
If we look at the first layer $\varphi_1$, which consists in ``plugging-in'' the ``aggregated'' invariant maps to the input space (see Figure~\ref{fig:theorem}), its equivariance, from the input layer to the first hidden layer of $\wh{F}$, will be written as $\varphi_1(\sigma\cdot f) = \sigma \ast \varphi_1(f)$, as seen in Figure~\ref{subfig:CD-our-model}.

Overall, using commutative diagrams, Figure~\ref{fig:CD-models} highlights the differences between the group actions involved in these architectures.

\begin{figure}[htpb]
    \centering
    \begin{subfigure}{0.8\linewidth}
        \centering
        \begin{tikzcd}[ampersand replacement = \&] %
            V_{0} \arrow[r, "\varphi_1"] \arrow[d, "\sigma\cdot"] \arrow[dr, phantom, "\circlearrowright"]
            \& [2em] V_{1} \arrow[r, "\varphi_2"] \arrow[d, "\sigma\cdot"] \arrow[dr, phantom, "\circlearrowright"]
            \& [2em] V_{2} \arrow[r] \arrow[d, "\sigma\cdot"]
            \& [2em] \cdots  %
            \\
            [1em] V_{0} \arrow[r, "\varphi_1"]
            \& V_{1} \arrow[r, "\varphi_2"]
            \& V_{2} \arrow[r]
            \& \cdots
        \end{tikzcd}
        \caption{Usual equivariant architecture: the action ``$\cdot $'' is the same in the input space as well as in each hidden spaces.}
        \label{subfig:CD-known-model}
    \end{subfigure}
    \par\bigskip
    \begin{subfigure}{0.8\linewidth}
        \centering
        \begin{tikzcd}[ampersand replacement = \&]
            V_{0} \arrow[r, "\varphi_1"] \arrow[d, "\sigma\cdot"] \arrow[dr, phantom, "\circlearrowright"] 
            \& [2em] V'_{1} \arrow[r, "\varphi_2"] \arrow[d, "\sigma\ast", red] \arrow[dr, phantom, "\circlearrowright"] 
            \& [2em] V'_{2} \arrow[r] \arrow[d, "\sigma\ast", red]
            \& [2em] \cdots
            \\
            [1em] V_{0} \arrow[r, "\varphi_1"]
            \& V'_{1} \arrow[r, "\varphi_2"]
            \& V'_{2} \arrow[r]
            \& \cdots
        \end{tikzcd}
        \caption{Equivariant architecture arising from Theorem~\ref{thm:approx}: the action ``$\cdot$'' on the input space is different than the action ``$\ast$'', in red, on the hidden spaces.}
        \label{subfig:CD-our-model}
    \end{subfigure}
    \caption{Commutative diagrams for comparing between a usual equivariant multi-layer architecture, and ours arising from Theorem~\ref{thm:approx}.
        For each diagram, the two lines both represent the same multi-layer equivariant architecture with the $\varphi_i$ being the equivariant layers between hidden spaces. 
        Each square encapsulating a $\circlearrowright$ symbol is a commutative diagram that represents the equivariance of the layer $\varphi_i$, \textit{i.e}, the fact that $\varphi_i$ commutes with some action of the group.
This means that, within these squares, from the top left to the bottom right the two possible paths are equal.}
    \label{fig:CD-models}

\end{figure}

This new action ``$\ast$'', which is still an action of the group $G$, can be described as the ``induced representation of the restricted representation action on the hidden layers''. 
In Appendix~\ref{sec:Theoretical explanation of Remark 1}, a rigorous discussion about this phenomenon and definition of this action ``$\ast$'' are done using tools from Representation Theory.
\end{remark}

\paragraph{On the number of free parameters.}
Another notorious advantage of multi-layer equivariant architectures such as~\cite{deepsets, maron2020learning_sets}, is that the design of equivariant blocks requires few free parameters thanks to a phenomenon of parameters sharing enabled by the symmetry.
Hence, in terms of the total number of free parameters, there is a significant advantage in using layer-wise equivariant networks, rather than naive fully connected ones, in order to approximate an equivariant map. 
Is it also the case for our architecture?

We clarify that point in the following remark. 
We claim that, indeed, our equivariant architecture is more efficient than naive fully connected networks, in terms of a number of free parameters.
Proofs and rigorous explanations are left in appendix~\ref{sec:Theoretical explanation of Remark 2}.

\begin{remark}\label{rem:difference-of-models parameters}
Although our equivariant model differs from the standard equivariant models, the number of parameters of our models can also be less than that of a fully connected neural network.
For example, let $G=S_n$ act on $X=Y=\{1, \dots, n \}$ by permutation and $V, W$ be vector spaces over $\RR$. 
For $S_n$-equivariant map $F \colon V^n \to W^n$, we can take $G$-equivariant deep neural network $\wh{F}$ approximating $F$ by Theorem~\ref{thm:approx}.  
Then, the first hidden layer of $\wh{F}$ can be written as the form $V_1^{n^2}$ for a vector space $V_1$. 
The linear map $V^n\to V_1^{n^2}$ is equivariant for permutation action on $V^n$ and the direct sum of ``induced representations of restriction representations to the stabilizer subgroup'' on $V_1^{n^2}$. 
By the argument on irreducible representations, the number of parameters of the linear map between the input layer $V^n$ and the first hidden layer $V_1^{n^2}$ becomes $5\dim V \dim V_1$. Because the number of parameters of the linear map $V^n \to V_1^{n^2}$ for the fully connected model is $n^3 \dim V \dim V_1$, our model can be constructed by much fewer parameters than the fully connected model. In Appendix~\ref{sec:Theoretical explanation of Remark 2}, precise statement and proof of this fact are concluded using Schur's Lemma and Young's diagrams. 
\end{remark}

\section{Some specific cases}\label{sec:examples}

In this section, we propose an application of Theorems~\ref{thm:main} and~\ref{thm:approx} to a few examples.

\subsection{Permutation equivariant maps}
\label{subsec:permutation}

As a first example, we consider $S_n$-equivariant maps. Recall the context from Section~\ref{sec:warmup example S_n} where  $X=Y = \{1, 2, \dots, n \}$ and $V, W$ are generic sets. The set $Y$ only has a single orbit by the $S_n$ action: 
$O_1 = S_n\cdot 1$, which can be written as 
\[
 O_1  = \{ (1 \ i)\cdot 1 \mid i= 1, \dots, n \} = \{1,2, \dots, n \}, 
\]
where $(i \ j)\in S_n$ is the transposition between $i$ and $j$. 

Hence, we recover Proposition~\ref{prop:main result the case of S_n} from Theorem~\ref{thm:main}. Namely, that an $S_n$-equivariant map $F: V^X \to W^X$ is uniquely determined by its $\Stab(1)$-invariant component $\wt{F}_1\colon V^X\to W$ via 
\begin{equation*}
    F(f) = (\wt{F}_1((1 \ i)\cdot f))_{i = 1}^n.
\end{equation*}
By this argument, any analysis of $S_n$-equivariance of the map $F$ can be reduced to analyze only one $\Stab_{S_n}(1)$-invariant map $\wt{F}_1\colon V^X\to W$ (Figure~\ref{fig:theorem}).  

\begin{figure}[t]
\begin{center}
\includegraphics[width=90mm]{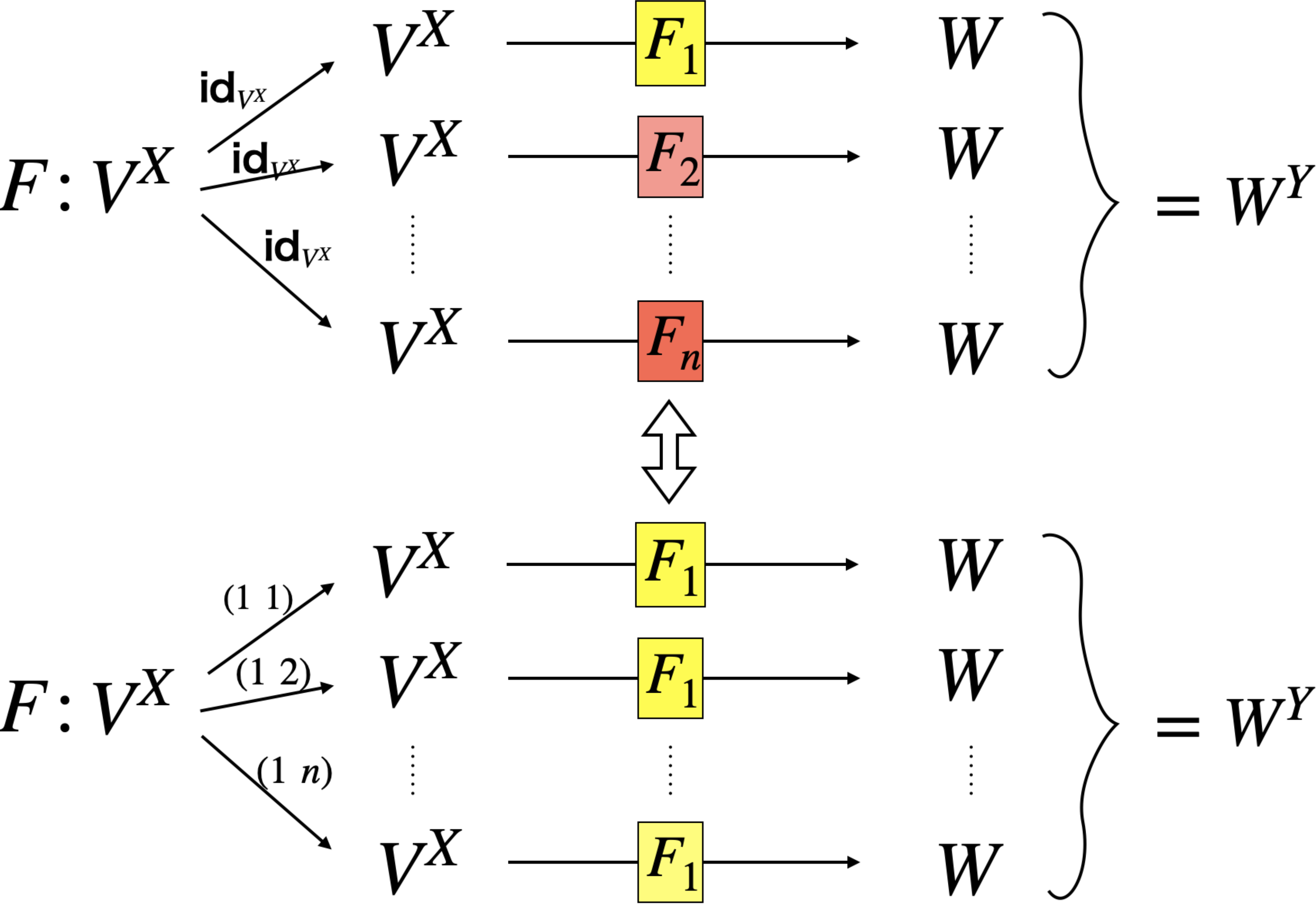}
\caption{The case of $G = S_n$ and 
$X = Y = \{1,2, \dots, n \}$. The $S_n$-orbit of $1\in Y$ is the whole set $Y$, thus $S_n$-equivariant map $F$ is determined by only one $\Stab_{S_n}(1)$-invariant map $\wt{F}_1$. 
}\label{fig:theorem}
\end{center}
\end{figure}

In this case, as a hypothesis set $\ca{H}_{\Stab_{S_n}(1)}$ of $\Stab_{S_n}(1)$-invariant deep neural networks, we can use the universal approximators defined in  \citet{maron2019universality}. By this $\ca{H}_{\Stab_{S_n}(1)}$ and Theorem~\ref{thm:approx}, we obtain universal approximators for $S_n$-equivariant maps.

\subsection{Finite groups}
In this subsection, $G$ is any finite group. 
By Proposition~\ref{prop:embedding-to-symmetric-group} in Appendix~\ref{sec:groups-actions}, $G$ is a 
subgroup of $S_n$ for some $n$. Hence, $G$ acts on 
$X = Y = \{1, 2, \dots, n \}$ via permutation action of $S_n$. 

The $G$-orbit decomposition of $Y$ is generally nontrivial. 
Let $Y = \bigsqcup_{i = 1}^k O_{y_i}$ be the $G$-orbit decomposition of $Y$ and $y_1, y_2, \dots, y_l$ be elements of $G$ such that $O_{y_i} = G y_i$ for $i = 1,2, \dots, k$. 
We represent $O_{y_i}$ as $ O_{y_i} = \{ y_{i1}, y_{i2}, \dots, y_{im_i} \}$, and choose $\sigma_{ij}\in G$ such that $y_{ij} = \sigma_{ij}^{-1}\cdot y_i$. Then, we remark that $\sum_{i=1}^k m_i = n$. 

We consider a $G$-equivariant map 
$F$ from $V^X$ to $W^Y$ for vector spaces $V, W$ over $\RR$.  
By Theorem~\ref{thm:main}, $F$ can be written by 
\begin{align}
 F(f) = (\wt{F}_i(\sigma_{ij} \cdot f ))_{i=1,\dots, k \atop j = 1, \dots, m_i} \,, \label{eq:finite-group}
\end{align}
for any $f\in V^X$, where $\wt{F}_i(f) = F(f)(y_i)$ is 
$\Stab_G(y_i)$-invariant. 
We remark that this map $F$ is determined by 
the $\Stab_G(y_i)$-invariant map 
$\wt{F}_i$ from $V^X$ to $W$ for $i = 1, 2, \dots, l$. 

For general finite group $G$, as a hypothesis space $\ca{H}_{\Stab_G(y_i)\text{-inv}}$, we choose the set of models by using deep neural networks with tensors introduced in \citet{maron2019universality}, which is guaranteed to approximate any $\Stab_G(y_i)$-invariant continuous maps. \citet{keriven2019universal} also construct a universal approximator for graph neural networks.

\subsection{The special orthogonal group $\SO_2(\RR)$} 
As for Example~\ref{ex:inv-eq-tasks}~(iii), we consider the rotation actions on the set of the ideal images. 
The index sets are $X = Y=  \RR^2$. We consider $\Map(X, \RR^3) = (\RR^3)^X$ as 
 the set of ideal images with the values for ideal RGB color channels 
 $V = \RR^3$. 
 Let $W$ be a set of output labels. 
 Then, the $2$-dimensional special orthogonal group $\SO_2(\RR)$ acts on  $X=Y=\RR^2$. 
We consider an $\SO_2(\RR)$-equivariant (i.e., rotation equivariant) map $ F\colon V^{\RR^2} \to W^{\RR^2}$. %
Then, the orbit $O_{y_i} = \SO_2(\RR)\cdot y_i$ of $y_i = (i,0)\in \RR^2$ 
 for $i\in I = [0,\infty)$ is 
$ O_{y_i} = \{ (a,b) \in \RR^2 \mid a^2 + b^2 = i^2  \}$, 
and $Y = \bigsqcup_{i \in I} O_{y_i}$ holds. 
On the other hand, the stabilizer subgroup of $\SO_2(\RR)$ for 
$y_i = (i,0)$ is 
\[
 \Stab_{\SO_2(\RR)}(y_i) = \begin{cases}
    \tiny \left\{ \begin{pmatrix}
    	1 & 0 \\ 0 & 1
    \end{pmatrix} \right\} & \text{if} \ i > 0, \\ 
    \SO_2(\RR) & \text{if} \ i=0. 
 \end{cases}
\]

Then, by Theorem~\ref{thm:main}, the $\SO_2(\RR)$-equivariant map 
$F$ can be written by 
\[
 F(f) = \left((\wt{F}_i(g\cdot f))_{i \in (0,\infty) \atop g\in \SO_2(\RR)}
 , \wt{F}_0(f)\right).
\]
This means that $\SO_2(\RR)$-equivariant map $F$ is determined by 
the maps $\wt{F}_i\colon ([0,1]^3)^{\RR^2} \to W $ for 
$i \in [0,\infty)$ as Figure~\ref{fig:rotation}. Unfortunately, $\wt{F}_i$ for $i>0$ does not have 
any invariance because the stabilizer subgroup of $\SO_2(\RR)$ for 
$(i,0)$ is $\tiny\left\{ \begin{pmatrix}
    	1 & 0 \\ 0 & 1
    \end{pmatrix} \right\}$. 

\begin{figure}[t]
\begin{center}
\includegraphics[width=120mm]{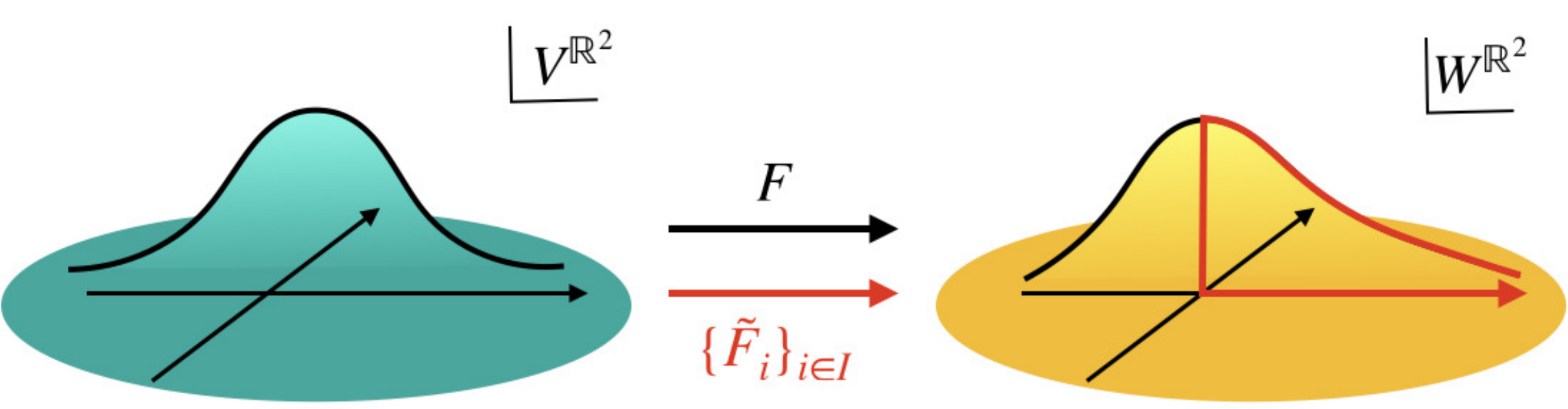}
\caption{Rotation equivariant map $F$ is determined the maps 
$\{ \wt{F}_i \}_{i \in I}$ where $I$ is the nonnegative real line $\RR_{\geq 0}$. We can recover $F$ from 
$\{ \wt{F}_i \}_{i \in I}$ by ``rotating'' $\{\wt{F}_i(g\cdot )\}_{i\in I, g\in \SO_2(\RR)}$}.\label{fig:rotation}
\end{center}
\end{figure}

We assume that $X, Y$ are the disk 
$D=\{(x,y) \mid x^2 + y^2 \leq R \}$ of radius $R>0$.   
We can apply Theorem~\ref{thm:approx}. Then, if there is a universal approximators $(\wh{\wt{F}}_i)_{i\in I}$ of $(\wt{F}_i)_{i \in I}$, $\Psi( (\wh{\wt{F}}_i)_{i\in I})$ approximates the map $F = \Psi((\wt{F}_i)_{i \in I})$.

In this case, as a hypothesis set $\ca{H}_{\Stab_{\SO_2(\RR)}(y_i){\text -inv}}$ of $\Stab_{\SO_2(\RR)}(y_i)$-invariant deep neural networks, we can use the universal approximators defined in  \citet[Definition~3.2]{yarotsky2018universal}. By this $\ca{H}_{\Stab_{\SO_2(\RR)}(y_i){\text -inv}}$ and Theorem~\ref{thm:approx}, we obtain universal approximators for $\SO_2(\RR)$-equivariant maps.

\subsection{The translation group $\mathbb{T}$ for $\mathbb{R}^2$} 

It is known that the convolutional layers of convolution neural networks are translation equivariant. In this subsection, we consider an example for $X = Y = \RR^2$ and the actions of the translation group $\mathbb{T}$ for $\RR^2$ on $V^X$ and $W^Y$. The translation group $\mathbb{T}$ is defined by 
\[
 \mathbb{T} = \{ T_{\bm{v}} \mid \bm{v} \in \RR^2 \}, 
\]
where each translation $T_{\bm{v}}$ is defined by $T_{\bm{v}}(\bm{x}) = \bm{x} + \bm{v}$. 
For $f\in V^X$, the action of $T_{\bm{v}}$ on $f$ is defined by
\[
 (T_{\bm{v}}\cdot f)(\bm{x}) = f(T^{-1}_{\bm{v}}(\bm{x})) = f(\bm{x} - \bm{v}). 
\]

We consider an $\mathbb{T}$-equivariant (i.e., translation equivariant) map $ F\colon V^{\RR^2} \to W^{\RR^2}$. 
In this case, for $(0,0)\in Y =\RR^2$, the orbit $O_{(0,0)} = \mathbb{T}\cdot (0,0)$ is same as $Y=\RR^2$ and $\Stab_{\mathbb{T}}((0,0)) = \{ \text{Id} \}$. 
Then, by Theorem~\ref{thm:main}, the $\mathbb{T}$-equivariant map 
$F$ can be written by 
\[
 F(f) = (\wt{F}_{(0,0)}(T_{\bm{v}}\cdot f))_{\bm{v}\in Y}.
\]
In particular, $\mathbb{T}$-equivariant map $F$ is determined by 
one $\{ \text{Id} \}$-invariant map (i.e., usual map without invariance) $\wt{F}_{(0,0)}\colon V^X \to W $. 
In this situation, by Theorem~\ref{thm:approx}, if there is a universal approximators $\wh{\wt{F}}_{(0,0)}$ of $\wt{F}_{(0,0)}$, $\Psi(\wh{\wt{F}}_{(0,0)})$ approximates the map $F = \Psi(\wt{F}_{(0,0)})$.

\section{Application to the numbers of parameters}

\subsection{A relation of the numbers of parameters for invariant/equivariant maps} 

In the case of finite group $G$, we investigate the number of parameters from the point of view of approximations. 
Let $G$ be a group and $X, Y$ be two finite $G$-sets. Let $Y=\bigsqcup_{i \in I} O_{y_i}$ be the $G$-orbit decomposition of $Y$ and $K\subset V^X$ be a compact subset which is stable by the $G$-action.  
For $F\in  \Equiv_G^{\cont}(K, W^Y)$, we define $c_G^{\rm{eq}}(\varepsilon; F)$ to be the minimum number of the parameters of $G$-equivariant deep neural networks that can approximate $F$ in accuracy $\varepsilon>0$. 
More precisely, there exists a $G$-equivariant deep neural network $\wh{F}$ of which the number of parameters is equal to $c_G^{\rm eq}(\varepsilon; F)$ and $\sup_{f\in K}\| \wh{F}(f) - F(f) \|_{W^Y}\leq \varepsilon$, and there is no such $G$-equivariant deep neural network of which the number of parameters is less than $c_G^{\rm eq}(\varepsilon; F)$.
Similarly, for subgroup $H$ of $G$ and $\wt{F}\in  \Inv_H^{\cont}(K, W)$, we define $c_H^{\rm{inv}}(\varepsilon; \wt{F})$ to be the minimum number of the parameters of $H$-invariant deep neural networks that can approximate $\wt{F}$ in accuracy $\varepsilon>0$. 
The existences of such $c_H^{\rm{inv}}(\varepsilon; \wt{F})$ and $c_G^{\rm eq}(\varepsilon; F)$ are guaranteed by the results for universal approximation theorems such as \citet{maron2019universality} or Theorem~\ref{thm:approx} in this volume.  

For $F\in  \Equiv_G^{\cont}(K, W^Y)$, $\wt{F}_{i} \in \Inv_{\Stab_G(y_{i})}(V^X, W)$ is defined by $(\wt{F}_i)_{i\in I} = \Phi(F)$ in the sense of Theorem~\ref{thm:main}. 
Then, we have the following:
\begin{theorem}\label{thm:ineq-num-of-params}
For any $\varepsilon>0$, the following holds: 
\begin{align}
	\max_{i\in I} c_{\Stab_G(y_i)}^{\rm{inv}}(\varepsilon; \wt{F}_i)\leq c_G^{\rm{eq}}(\varepsilon; F) \leq \sum_{i\in I } c_{\Stab_G(y_i)}^{\rm{inv}}(\varepsilon; \wt{F}_i). \label{eq:ineq-num-of-params}
\end{align}
\end{theorem}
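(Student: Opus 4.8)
The plan is to establish the two inequalities separately. The right-hand (sum) bound will come from running the construction of Theorem~\ref{thm:approx} with \emph{optimal} invariant building blocks, while the left-hand (max) bound will come from reading off a single output coordinate of an optimal equivariant network using the map $\Phi$ of Theorem~\ref{thm:main}. The asymmetry between the two sides---a sum on the right but only a maximum on the left---reflects the fact that a general $G$-equivariant network realizing $c_G^{\rm{eq}}(\varepsilon;F)$ may share parameters across the different orbit components, whereas the aggregated network produced by $\Psi$ keeps one independent copy per orbit representative.

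For the upper bound, I would, for each $i\in I$, select a $\Stab_G(y_i)$-invariant network $\wh{\wt{F}}_i$ attaining the minimum $c_{\Stab_G(y_i)}^{\rm{inv}}(\varepsilon;\wt{F}_i)$ and satisfying $\sup_{f\in K}\|\wh{\wt{F}}_i(f)-\wt{F}_i(f)\|_W\leq\varepsilon$, then set $\wh{F}=\Psi((\wh{\wt{F}}_i)_{i\in I})$. Theorem~\ref{thm:approx} guarantees that $\wh{F}$ is $G$-equivariant and approximates $F$ within $\varepsilon$. By~\eqref{eq:def-of-psi}, each output coordinate $\wh{F}(f)(y)$ with $y\in O_{y_i}$, say $y=\sigma^{-1}\cdot y_i$, is computed by the single network $\wh{\wt{F}}_i$ evaluated on the parameter-free relabeling $\sigma\cdot f$; hence $\wh{F}$ introduces no new free parameters and its total parameter count is exactly $\sum_{i\in I}c_{\Stab_G(y_i)}^{\rm{inv}}(\varepsilon;\wt{F}_i)$. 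Since $\wh{F}$ is an admissible $G$-equivariant approximant of $F$, minimality of $c_G^{\rm{eq}}$ then yields $c_G^{\rm{eq}}(\varepsilon;F)\leq\sum_{i\in I}c_{\Stab_G(y_i)}^{\rm{inv}}(\varepsilon;\wt{F}_i)$.

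For the lower bound, I would instead fix an arbitrary $G$-equivariant network $\wh{F}$ that attains $c_G^{\rm{eq}}(\varepsilon;F)$ and approximates $F$ within $\varepsilon$, and for each fixed $i\in I$ consider its single output coordinate $\wh{F}(\cdot)(y_i)$. The well-definedness of $\Phi$ in Theorem~\ref{thm:main} shows this coordinate is $\Stab_G(y_i)$-invariant, and since the supremum norm on $W^Y$ dominates the value at the one index $y_i$, it approximates $\wt{F}_i=F(\cdot)(y_i)$ within $\varepsilon$ on $K$. As $\wh{F}(\cdot)(y_i)$ is obtained by keeping only the neurons and weights feeding the coordinate $y_i$ and discarding the rest, it is a subnetwork of $\wh{F}$ and uses at most $c_G^{\rm{eq}}(\varepsilon;F)$ parameters; therefore $c_{\Stab_G(y_i)}^{\rm{inv}}(\varepsilon;\wt{F}_i)\leq c_G^{\rm{eq}}(\varepsilon;F)$ for every $i$, and taking the maximum over $i\in I$ finishes this direction.

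The approximation logic in both directions is essentially Theorem~\ref{thm:approx} together with Theorem~\ref{thm:main}, so the main obstacle is making the parameter accounting rigorous. Concretely, I expect the two delicate points to be: justifying that aggregation through $\Psi$ is additive in the number of parameters (one shared copy of $\wh{\wt{F}}_i$ per orbit, rather than one per element of the orbit), and justifying that restricting an equivariant network to a single output coordinate yields a genuine $\Stab_G(y_i)$-invariant subnetwork whose parameter count does not exceed that of the full network and which legitimately competes in the definition of $c_{\Stab_G(y_i)}^{\rm{inv}}$. Both reduce to fixing a precise notion of ``number of parameters of a deep neural network'' that is additive under parallel composition and monotone under passing to the subnetwork computing a subset of outputs; once this convention is pinned down, the two inequalities follow immediately.
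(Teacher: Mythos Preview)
Your proposal is correct and follows essentially the same route as the paper: the upper bound is obtained by aggregating optimal invariant approximants via $\Psi$ and invoking the approximation estimate of Theorem~\ref{thm:approx}, while the lower bound is obtained by projecting an optimal equivariant approximant to a single coordinate via $\Phi$. Your discussion of the parameter-counting conventions (additivity under aggregation, monotonicity under passing to a subnetwork) is in fact more explicit than the paper's own treatment, which simply asserts these bounds without further justification.
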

\begin{proof}
We first show the left inequality. Let $\wt{F}_{i_0} \in \Inv_{\Stab_G(y_{i_0})}(V^X, W)$ for an $i_0\in I$. 
We can find a $G$-equivariant deep neural network $\wh{F}$ such that $\|\wh{F}(f) - F(f) \|_{W^Y}\leq\varepsilon$ for any $f\in K$ and the number of parameters of $\wh{F}$ is equal to $c_G^{\rm eq}(\varepsilon; F)$. 
The existence of $\wh{F}$ is guaranteed by Theorem~\ref{thm:approx}. 
For this $F$, $\wh{F}(\cdot)(y_{i_0})$ satisfies $\|\wh{F}(f)(y_{i_0}) - \wt{F}_{i_0}(f) \|_{W}\leq \varepsilon$ and the number of parameters of the deep neural network $\wh{F}(\cdot)(y_{i_0})$ is less than or equal to $c_G^{\rm eq}(\varepsilon; F)$.  
Thus, $\wt{F}_{i_0}$ can be approximated in accuracy $\varepsilon$ by a $\Stab_G(y_{i_0})$-invariant deep neural network of which the number of parameters is less than or equal to $c_G^{\rm eq}(\varepsilon; F)$. 
This implies $c_{\Stab_G(y_{i_0})}^{\rm{inv}}(\varepsilon; \wt{F}_{i_0})\leq c_G^{\rm eq}(\varepsilon; F)$. 
Because $i_0\in I$ is arbitrary, we have the first inequality of \eqref{eq:ineq-num-of-params}. 

We next show the second inequality of \eqref{eq:ineq-num-of-params}. 
For any $i\in I$, there is a deep neural network $\wh{\wt{F}}_i$ such that $\|\wh{\wt{F}}_i(f) - \wt{F}_i(f) \|_W \leq \varepsilon$ for $f \in K$ and the number of parameters of this is $c_{\Stab_G(y_i)}^{\rm inv}(\varepsilon; \wt{F}_i)$. 
We set $\wh{F} = \Phi((\wh{\wt{F}}_i)_{i\in I})$. 
Then, by Theorem~\ref{thm:main} and inequality \eqref{eq:bound-of-supnorm-Y}, we have 
\begin{align}
 \sup_{f\in K}\| \wh{F}(f) - F(f) \|_{W^Y} 
 &= \sup_{f\in K} \sup_{y\in Y} \| \wh{F}(f)(y) - F(f)(y) \|_{W} \notag \\ 
 &= \sup_{f\in K} \left(  \sup_{i \in I}\left(\sup_{\sigma\in G} \| \wh{F}(f)(\sigma^{-1}\cdot y_i) - F(f)(\sigma^{-1}\cdot y_i) \|_{W} \right) \right) \notag \\
  &= \sup_{f\in K} \left(  \sup_{i \in I}\left(\sup_{\sigma\in G} \| \wh{F}(\sigma\cdot f)(y_i) - F(\sigma\cdot f)(y_i) \|_{W} \right) \right) \notag \\
 &\leq \sup_{i \in I}\left(\sup_{f \in K} \| \wh{F}(f)(y_i) - F(f)(y_i) \|_{W} \right) \notag \\
  &= \sup_{i \in I}\left(\sup_{f \in K} \| \wh{\wt{F}}_i(f) - \wt{F}_i(f) \|_{W} \right) \leq \varepsilon. \label{eq:calculation-sup-norm}
\end{align} 
Because the number of parameters of $\wh{F}(f)$ is less than or equal to $\sum_{i\in I} c_{\Stab_G(y_i)}^{\rm{inv}}(\varepsilon; \wt{F}_i)$, any elements in $ \Equiv_G^{\rm cont}(K, W^Y)$ can be approximated in accuracy $\varepsilon$ by a $G$-equivariant deep neural network of which the number of parameters is less than or equal to $\sum_{i\in I} c_{\Stab_G(y_i)}^{\rm{inv}}(\varepsilon; \wt{F}_i)$. This implies $c_G^{\rm{eq}}(\varepsilon; F) \leq \sum_{i\in I} c_{\Stab_G(y_i)}^{\rm{inv}}(\varepsilon; \wt{F}_i)$. 
This is the second inequality of \eqref{eq:ineq-num-of-params}. 
\end{proof}

\subsection{Approximation rate for $G$-equivariant deep neural networks}

In the case of $G= S_n$, $X = \{1, 2, \dots, n \}$, $V = W = \RR$, and $K=[0,1]^n \subset V^n$, 
\citet[Theorem~4]{sannai2021improved} showed an approximation rate of $S_n$-invariant ReLU deep neural networks for elements in a H\"{o}lder space. 
In this subsection, we generalize this result to $G$-invariant maps for a finite group $G\subset S_n$ and show an approximation rate for $G$-equivariant ReLU deep neural networks. 

Let $\alpha$ be a positive constant. The H\"{o}lder space of order $\alpha$ is the space of continuous functions $f$ such that all the partial derivatives of $f$ up to order $\lfloor\alpha \rfloor$ exist and the partial derivatives of order $\lfloor\alpha \rfloor$ are $\alpha - \lfloor\alpha \rfloor$ H\"{o}lder-smooth. This space is usually denoted as $\gC^{\alpha,\alpha-\lfloor\alpha \rfloor}$, but we denote it as $\gC^\alpha$ here for convenience. For $f \colon K \to \RR$, the H\"{o}lder norm is defined by 
\begin{equation*}
 \| f \|_{\gC^\alpha} := \max_{\beta: |\beta|<\lfloor\alpha \rfloor}\| \partial^\beta f(x) \|_{L^\infty(K)} + \max_{\beta\colon \vert\beta\vert = \lfloor \alpha \rfloor} \sup_{x, x'\in K, x\neq x'} 
 \frac{|\partial^\beta f(x) - \partial^\beta f(x') |}{\| x - x'\|^{\alpha- \lfloor \alpha \rfloor}_\infty}.  
\end{equation*} 

For $B>0$,  a $B$-radius ball in the H\"{o}lder space on $K$ is defined as $\gC_B^\alpha = \{ f \in \gC^\alpha \mid \|f \|_{\gC^\alpha} \leq B \}$. %

The following is a generalization of \citet[Theorem~4]{sannai2021improved} to finite group $G\subset S_n$. 
\begin{theorem}\label{thm:approxi-rate-invariant}%
 Let $G\subset S_n$ be a finite group acting on the sets $X = Y = \{1, 2, \dots, n \}$ and $Y = \bigsqcup_{i=1}^k O_i$ be the $G$-orbit decomposition, and $K=[0,1]^n \subset \RR^n$. 
 For any $\varepsilon>0$, let $\ca{H}_{G\text{\rm -inv}}$ be a set of $G$-invariant ReLU deep neural networks from $K$ to $\RR$ which has at most $O(\log(1/\varepsilon))$ layers and $O(\varepsilon^{-n/\alpha}\log(1/\varepsilon))$ non-zero parameters. 
 Then, for any $G$-invariant function $\wt{F} \in \gC^\alpha_B$, there exists $\wh{\wt{F}} \in \ca{H}_{G{\text{\rm -inv}}}$ such that 
 \[
  \sup_{f \in K} |\wh{\wt{F}}(f) - \wt{F}(f) | \leq \varepsilon. 
 \]
\end{theorem}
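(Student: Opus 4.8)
The plan is to reduce the problem to the non-invariant deep ReLU approximation rate for Hölder functions and then restore $G$-invariance by averaging over $G$, using crucially that $|G|$ is a constant. First I would invoke the base approximation result underlying \citet[Theorem~4]{sannai2021improved}: for the given $\wt F\in\gC^\alpha_B$ and any $\varepsilon>0$ there is an ordinary (not necessarily invariant) ReLU network $g\colon K\to\RR$ with at most $O(\log(1/\varepsilon))$ layers and $O(\varepsilon^{-n/\alpha}\log(1/\varepsilon))$ nonzero parameters such that $\sup_{f\in K}|g(f)-\wt F(f)|\le\varepsilon$.

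Next I would symmetrize, setting $\wh{\wt F}(f)=\frac{1}{|G|}\sum_{\sigma\in G} g(\sigma^{-1}\cdot f)$. Each map $f\mapsto\sigma^{-1}\cdot f$ is a fixed relabelling of the input coordinates, so $\wh{\wt F}$ is realized by placing $|G|$ disjoint copies of $g$ in parallel, each fed the appropriately permuted inputs, followed by one linear averaging layer; in particular $\wh{\wt F}$ is a ReLU network. A one-line reindexing shows $\wh{\wt F}(\tau\cdot f)=\wh{\wt F}(f)$ for every $\tau\in G$, so $\wh{\wt F}$ is genuinely $G$-invariant and is a legitimate candidate in $\ca{H}_{G\text{-inv}}$. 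For the error I would use that $K=[0,1]^n$ is $G$-stable together with the $G$-invariance of $\wt F$: writing $\wt F(f)=\frac{1}{|G|}\sum_{\sigma\in G}\wt F(\sigma^{-1}\cdot f)$ and applying the triangle inequality bounds $|\wh{\wt F}(f)-\wt F(f)|$ by $\max_{\sigma\in G}\sup_{h\in K}|g(h)-\wt F(h)|\le\varepsilon$.

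The step that needs genuine care, and where the hypothesis that $G$ is finite is essential, is the accounting of depth and parameters after symmetrization. The parallel construction stacks $|G|$ copies of $g$ at the same depth and appends a single averaging layer, so the depth remains $O(\log(1/\varepsilon))+O(1)=O(\log(1/\varepsilon))$ and the parameter count becomes $|G|\cdot O(\varepsilon^{-n/\alpha}\log(1/\varepsilon))+O(|G|)$; since $G$ is a fixed finite group, $|G|$ is an absolute constant that is absorbed into the $O(\cdot)$, and the budget is preserved. I would stress that this is exactly where the argument is cruder than the sharp $S_n$ case: for $G=S_n$ one cannot afford the $|G|=n!$ blow-up, which is precisely why \citet{sannai2021improved} route through a sorting network onto the fundamental domain instead of averaging; for a fixed finite $G$, plain averaging already matches the stated order of growth.

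Finally, I would remark on an alternative route giving constants independent of $|G|$: build a continuous, piecewise-linear, $G$-invariant reduction map $\pi_G\colon K\to D_G$ onto a fundamental domain with $\pi_G(f)\in G\cdot f$, realize $\pi_G$ by a ReLU network, and compose a Hölder approximation of $\wt F|_{D_G}$ with $\pi_G$; the error bound is then automatic because $\wt F=\wt F\circ\pi_G$ exactly. I do not pursue this as the primary argument because the main obstacle here is continuity: unlike the $S_n$ case where $\pi_{S_n}=\mathrm{sort}$ is a clean network of $\min$/$\max$ operations, there is no obvious continuous piecewise-linear selection of orbit representatives for a general subgroup $G\subset S_n$ (for instance, lexicographic minimization over the orbit is discontinuous at ties), so realizing such a $\pi_G$ by a ReLU network is the genuinely hard part.
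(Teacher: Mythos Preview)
Your averaging argument is correct: Yarotsky's rate supplies the base network $g$, the Reynolds average $\frac{1}{|G|}\sum_{\sigma\in G}g(\sigma^{-1}\cdot\,)$ is $G$-invariant, the error survives averaging since $\wt F$ is $G$-invariant and $K$ is $G$-stable, and the $|G|$-fold parallel copy preserves the asymptotic depth and parameter budgets because $|G|$ is a constant independent of $\varepsilon$.

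The paper, however, takes precisely the alternative route you sketch and then set aside. It builds an explicit ReLU reduction map $\widetilde{\Sort}_G$ onto a ``fundamental domain'' $\Delta_G$ by sorting the coordinates \emph{within each $G$-orbit} of $\{1,\dots,n\}$: reindex so that the orbits form contiguous blocks, sort each block separately via $\min/\max$ ReLU networks as in the $S_n$ case, and reindex back. It then composes with an approximation of $\wt F$ on $\Delta_G$, invoking \citet[Proposition~9 and Theorem~4]{sannai2021improved} verbatim. What this is meant to buy over your argument is a hidden constant that does not carry a factor of $|G|$. Your caution about the sorting route is well placed, though: orbit-wise sorting is really a reduction for the larger product group $\prod_i S_{|O_i|}\supseteq G$, so $\widetilde{\Sort}_G(f)$ lies in $G\cdot f$ only when $G$ acts as the full symmetric group on each of its orbits. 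For a genuinely smaller $G$ (e.g.\ the cyclic group $\langle(1\,2\,3)\rangle\subset S_3$, where the orbit of $(1,2,3)$ is $\{(1,2,3),(3,1,2),(2,3,1)\}$ and contains no sorted tuple), the identity $\wt F=\wt F\circ\widetilde{\Sort}_G$ that the paper's argument relies on can fail. Your averaging proof is immune to this subtlety.
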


\begin{proof}
To generalize \citet[Theorem~4]{sannai2021improved}, it is sufficient to define the fundamental domain $\Delta_G$ for finite group $G\subset S_n$ and a sorting map $\Sort\colon [0,1]^n \to \Delta_G$, and to show that $\Sort$ can be realized by ReLU deep neural networks. 
The remaining part of the proof is similar to the proofs of \citet[Proposition~9, Theorem~4]{sannai2021improved}

Let $\{1,2, \dots, n \} = \bigsqcup_{i=1}^k O_i$ be a $G$-orbit decomposition and $I_j = \{ i^j_1, \dots, i^j_{m_j} \}$ satisfying $i^j_\ell < i^j_{\ell'}$ for $\ell < \ell'$. 

Then, a fundamental domain $\Delta_G$ for $G$ is given as 
\[
 \Delta_G = \{ (x_1, \dots, x_n)^\top \in [0,1]^n \mid x_{i^j_\ell} \geq x_{i^j_{\ell'}} \text{ if } \ell < \ell' \text{ for } j = 1, 2, \dots, k \}. 
\]

We consider the reordered index $(i^1_1, i^1_2, \dots, i^1_{m_1}, i^2_{1}, \dots, i^2_{m_2}, \dots, i^k_1, \dots, i^k_{m_k})$. 
We define the permutation $\sigma_0$ from $(1, 2, \dots, n)$ to this index as       
\[
	\sigma_0 = 
	\begin{pmatrix}
	1 & 2 & \dots & n \\
		i^1_1 & i^1_2 & \dots & i^k_{m_k} 
	\end{pmatrix}. 
\]
Then, by $\sigma_0^{-1}$, $(x_1, x_2, \dots x_n)^\top \in [0,1]^n$ is permutated as 
\[
 \sigma_0^{-1}\cdot (x_1, x_2, \dots, x_n)^\top = (x_{\sigma_0(1)}, x_{\sigma_0(2)}, \dots, x_{\sigma_0(n)})^\top= (x_{i^1_1}, x_{i^1_2}, \dots, x_{i^k_{m_k}})^\top. 
\]
We remark that this permutation can be realized by a linear map. 
On the other hand, we define $\Sort_G\colon [0,1]^n \to [0,1]^n$ by 
\begin{align*}
	\Sort_G&(x_1, \dots, x_n) = \\ &({\rm max}^1_1(x_1, \dots, x_n), {\rm max}^1_2(x_1, \dots, x_n), \dots, {\rm max}^k_{m_k}(x_1, \dots, x_n)). 
\end{align*}
Here, ${\rm max}^j_\ell(x_1, \dots, x_n)$ is the $\ell$-th largest value in $\{ x_{p_{j-1} + 1}, x_{p_{j-1} + 2} \dots, x_{p_j} \}$, where $p_j = \sum_{i=1}^{j-1}m_i$.  %
As proven in \citet[Proposition~8]{sannai2021improved}, we can show that the map $\Sort_G$ is represented by deep neural networks with ReLU activation functions. 
We define $\wt{\Sort}_G = \sigma_0 \circ \Sort_G \circ \sigma_0^{-1}$. 
Then, the image $\wt{\Sort}_G(x_1, \dots, x_n)$ is in the fundamental domain $\Delta_G$ and $\wt{\Sort}_G$ can also be realized by a ReLU deep neural network. 

By applying \citet[Proposition~9 and Proof of Theorem~4]{sannai2021improved} with this $\wt{\Sort}_G$, we can complete the proof. 
\end{proof}

 As mentioned in \cite{sannai2021improved} for $G=S_n$, also for finite group $G \subset S_n$, the approximation rate in Theorem~\ref{thm:approxi-rate-invariant} is the optimal rate without invariance obtained by \citet[Theorem~1]{yarotsky2018universal}. Thus, we show that $G$-invariant deep neural networks can also achieve the optimal approximation rate even with invariance. 

By combining Theorem~\ref{thm:approxi-rate-invariant} and similar argument in the proof of Theorem~\ref{thm:ineq-num-of-params}, we can show the approximation rate for $G$-equivariant maps: 

\begin{corollary}\label{cor:approxi-rate-equivariant}%
 Let $G\subset S_n$ be a finite group acting on the sets $X = Y = \{1, 2, \dots, n \}$ and $Y = \bigsqcup_{i=1}^k O_i$ be the $G$-orbit decomposition. Let $V=W=\RR$ and $K=[0,1]^n \subset \RR^n$. 
 For any $\varepsilon>0$, let $\ca{H}_{G{\text \rm -equiv}}$ be a set of $G$-equivariant ReLU deep neural networks from $K$ to $\RR^n$ with at most $O(\log(1/\varepsilon))$ layers and $O(k\varepsilon^{-n/\alpha}\log(1/\varepsilon))$ non-zero parameters. 
 Then, for any $G$-equivariant map $F \in (\gC^\alpha_B)^n$, there exists $\wh{F} \in \ca{H}_{G{\text \rm{-equiv}}}$ such that 
 \[
  \sup_{f\in K}\|\wh{F}(f) - F(f) \|_{\RR^n} \leq \varepsilon. 
 \]
\end{corollary}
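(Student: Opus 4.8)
The plan is to reduce the equivariant problem to $k$ invariant approximation problems through the decomposition of Theorem~\ref{thm:main}, apply Theorem~\ref{thm:approxi-rate-invariant} to each orbit, and reassemble via $\Psi$, reusing the norm computation from the proof of Theorem~\ref{thm:ineq-num-of-params}. First I would write $(\wt{F}_i)_{i=1}^k = \Phi(F)$; by Theorem~\ref{thm:main} each $\wt{F}_i = F(\cdot)(y_i)$ is $\Stab_G(y_i)$-invariant. Since $F\in(\gC^\alpha_B)^n$ means every coordinate map $F(\cdot)(y)$ lies in $\gC^\alpha_B$, in particular each $\wt{F}_i\in\gC^\alpha_B$.

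Next I would apply Theorem~\ref{thm:approxi-rate-invariant} to each $\wt{F}_i$, the key point being that $\Stab_G(y_i)$ is itself a finite subgroup of $S_n$ acting on $\{1,\dots,n\}$, so the hypotheses hold verbatim with $G$ replaced by $\Stab_G(y_i)$. This produces, for each $i$, a $\Stab_G(y_i)$-invariant ReLU network $\wh{\wt{F}}_i$ with $O(\log(1/\varepsilon))$ layers and $O(\varepsilon^{-n/\alpha}\log(1/\varepsilon))$ non-zero parameters satisfying $\sup_{f\in K}|\wh{\wt{F}}_i(f)-\wt{F}_i(f)|\leq\varepsilon$.

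Then I would set $\wh{F}=\Psi((\wh{\wt{F}}_i)_{i=1}^k)$. As in~\eqref{eq:finite-group}, the output coordinate indexed by $y_{ij}=\sigma_{ij}^{-1}\cdot y_i$ is $\wh{F}(f)(y_{ij})=\wh{\wt{F}}_i(\sigma_{ij}\cdot f)$. Because each $\sigma_{ij}$ merely permutes the $n$ input coordinates, $\sigma_{ij}\cdot$ is a fixed linear map that can be absorbed into the first layer of $\wh{\wt{F}}_i$, so every output coordinate is still computed by a ReLU network of the same depth and size; moreover the $m_i$ coordinates of the orbit $O_{y_i}$ share the single subnetwork $\wh{\wt{F}}_i$ with tied weights (exactly as in convolutional weight sharing). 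By Theorem~\ref{thm:main} the resulting $\wh{F}$ is genuinely $G$-equivariant. Running the $k$ subnetworks in parallel gives depth $O(\log(1/\varepsilon))$ and a total of $\sum_{i=1}^k O(\varepsilon^{-n/\alpha}\log(1/\varepsilon)) = O(k\varepsilon^{-n/\alpha}\log(1/\varepsilon))$ non-zero parameters. For the error I would repeat the chain of (in)equalities from~\eqref{eq:calculation-sup-norm}: using $G$-stability of $K$ to absorb $\sigma\cdot f$ into the supremum, the equivariant error collapses to $\sup_i\sup_{f\in K}|\wh{\wt{F}}_i(f)-\wt{F}_i(f)|\leq\varepsilon$.

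I expect the main obstacle to be the precise realization of $\Psi$ as a single $G$-equivariant ReLU network within the claimed parameter budget: one must verify that precomposition with the permutations $\sigma_{ij}$ costs no free parameters and that the $n$ output coordinates can share only $k$ underlying invariant subnetworks via tied weights, so that the count scales with the number of orbits $k$ rather than with $n$. A minor secondary point is reconciling $\|\cdot\|_{\RR^n}$ in the statement with the supremum norm on $W^Y$ used in~\eqref{eq:calculation-sup-norm}; since all norms on $\RR^n$ are equivalent, this only rescales $\varepsilon$ by a constant and leaves the asymptotic rate unchanged.
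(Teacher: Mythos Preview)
Your proposal is correct and follows essentially the same route as the paper: decompose $F$ via $\Phi$ into $\Stab_G(y_i)$-invariant components, apply Theorem~\ref{thm:approxi-rate-invariant} to each, reassemble through $\Psi$, and bound the error exactly as in~\eqref{eq:calculation-sup-norm}. If anything, you are more careful than the paper---you explicitly justify $\wt{F}_i\in\gC^\alpha_B$, discuss absorbing the permutations $\sigma_{ij}$ into the first layer, and flag the weight-sharing and norm-equivalence issues, none of which the paper's brief proof spells out.
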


\begin{proof}
Let $F \in (\gC^\alpha_B)^n$ be a $G$-equivariant map and $(\wt{F}_i)_{i\in I} = \Phi(F)$. 
Then, by Theorem~\ref{thm:main}, $\wt{F}_i$ is a $\Stab_G(y_i)$-invariant continuous function on $K$. 
By Theorem~\ref{thm:approxi-rate-invariant}, there is an element $\wh{\wt{F}}_i$ in $\ca{H}_{\Stab_G(y_i)\text{\rm -inv}}$ such that 
\[
 \sup_{f\in K} |\wh{\wt{F}}_i(f) - \wt{F}_i(f) | \leq \varepsilon. 
\]
We set $\wh{F} = \Psi((\wh{\wt{F}}_i)_{i\in I}) $. Then, by the similar argument in inequality~\eqref{eq:calculation-sup-norm}, we have 
\[
 \sup_{f\in K} \| \wh{F}(f) - F(f) \|_{\RR^n} \leq 
 \sup_{i\in I} \sup_{f\in K} | \wh{\wt{F}}_i(f) - \wt{F}_i(f) | \leq \varepsilon.
\]
Now, $\wh{F}$ has at most $O(\log(1/\varepsilon)$ layers and $O(k \varepsilon^{-n/\alpha}\log(1/\varepsilon))$ non-zero parameters because of the assumptions for $\wh{\wt{F}}_i$ ($i=1, 2, \dots, k$). This concludes the proof. 
\end{proof}

\section{Conclusion}\label{sec:conclusion}

In this study, we explained a fundamental relationship between equivariant maps and invariant maps for stabilizer subgroups.   
The relation allows any equivariant tasks for a group to be reduced to some invariant tasks for some stabilizer subgroups. 
As an application of this relation, we proposed the construction of universal approximators for equivariant continuous maps by using some invariant deep neural networks. 

Although our model is different from the other standard models introduced by \citet{deepsets} or \citet{maron2019universality}, the number of parameters of our models is fewer than fully connected models. Moreover, by the relation, we showed inequalities of the number of parameters of invariant or equivariant deep neural networks to approximate any continuous invariant/equivariant maps. 
We also showed an approximation rate of $G$-equivariant ReLU deep neural networks for elements of a H\"{o}lder space for finite group $G$.  

The invariant-equivariant relation is fundamental and applicable to a wide variety of situations.
We believe that this study will lead to further research in the field of machine learning.

\subsubsection*{Acknowledgments}
AT was supported in part by JSPS KAKENHI Grant No. JP20K03743, JP23H04484 and JST PRESTO JPMJPR2123. YT was partly supported by JSPS KAKENHI Grant Numbers JP21K11763 and Grant for Basic Science Research Projects from The Sumitomo Foundation Grant Number 200484.
MC was partly supported by the French National Research Agency in the framework of the  LabEx PERSYVAL-Lab (ANR-11-LABX-0025-01). The previous version of this work was done during MC's visit to RIKEN AIP.

\bibliography{main}
\bibliographystyle{tmlr}

\appendix

\section{Groups, actions, and orbits}\label{sec:groups-actions}
To introduce a precise statement of the main theorem, we briefly 
review several notions of groups, actions, and orbits. 
In general, a group is a set of transformations and 
an action of the group on a set is a transformation rule. This is 
further explained in the following examples. For more details, we 
refer the readers to \citet{rotman2012introduction}.

Let $G$ be a set with a product $\sigma\tau\in G$ for any elements $\sigma,\tau$ of $G$. 
Then, $G$ is called a group if $G$ satisfies 
the following conditions: 
\begin{enumerate}
    \item(Existence of the unit) There is an element $\epsilon \in G$ such that $\epsilon\sigma=\sigma\epsilon = x$ for all $\sigma \in G$. 
    \item(Existence of the inverse element) For any $\sigma \in G$, there is an element $\sigma^{-1} \in G$ such that $\sigma \sigma^{-1} = \sigma^{-1} \sigma = \epsilon$. 
    \item(Associativity) For any $\sigma, \tau, \phi \in G$, $(\sigma\tau)\phi = \sigma(\tau \phi)$. 
\end{enumerate}
We call $G$ a finite group if the number of elements of $G$ is finite. 
 If the product of $G$ is commutative, i.e., for any $\sigma,\tau\in G$, 
 $\sigma\tau= \tau\sigma$ holds, then $G$ is called an abelian (or a commutative) 
  group.   
 Let $G$ be a group and $H$ a subset of $G$. 
We call $H$ a subgroup of $G$ if $H$ is a group with the same product 
as that of $G$.

\begin{example} 

(i) \ The permutation group $S_n$ is one of the most important examples of finite groups:  
\[
 S_n = \{ \sigma\colon \{1, \dots, n \} \to 
 \{1, \dots, n \} \mid  \mathrm{bijective} \}
\]
and the product of $\sigma, \tau \in S_n$ is given 
by the composition $\sigma\circ \tau$ as maps. 
The permutation group $S_n$ is also called by the symmetric group. 

\noindent 
(ii) The special orthogonal group $\SO_2(\RR)$ is also an abelian group: 
\begin{align*}
\SO_2(\RR) &= \{A \in \RR^{2\times 2} \mid A^\top A = I, \ \det A = 1 \} \\
&= \left\{ \left. 
 \begin{pmatrix}
 	\cos \theta & \sin \theta \\ 
 	-\sin \theta & \cos \theta
 \end{pmatrix} \ \right| \ \theta \in [0, 2\pi)  \right\},  
\end{align*}
where $I$ is the $2\times 2$ unit matrix.
As seen above, the elements of $\SO_2(\RR)$ include the 
rotations on $\RR^2$.  
\end{example}

Next, we review group actions on sets. Let $X$ be a set. 
{A left (resp. right) action of $G$} on $X$ is defined 
as a map 
$X \to X; x\mapsto \sigma\cdot x$ for $\sigma\in G$ 
satisfying the following: 
\begin{enumerate}
    \item  For any $x \in X$, $\epsilon\cdot x = x$.  
    \item For any $\sigma,\tau \in G$ and $x\in X$, $(\sigma\tau)\cdot x = \sigma\cdot (\tau\cdot x)$ (resp. $(\sigma\tau)\cdot x = \tau\cdot (\sigma\cdot x)$).
\end{enumerate}
Then, we say that $G$ acts on $X$ from left (resp. right).  
Here, the reason why we call the latter a right action is that 
the condition $(\sigma\tau)\cdot x = \tau\cdot(\sigma\cdot x)$ seems as if the condition ``$x\cdot (\sigma\tau) = (x \cdot \sigma)\cdot \tau$''.

\begin{example}
(i) 
The permutation group $S_n$ acts on the set $\{ 1, 2, \dots, n \}$ from left by 
the permutation $\sigma\cdot i = \sigma(i)$. 

\noindent
(ii) The special orthogonal group $\SO_2(\RR)$ acts on $\RR^2$ 
from left by the rotation transformations.  
\end{example}

We introduce a subgroup that is often considered in this article.  
Let $G$ be a group acting on a set $X$ from left. 
For an element $x\in X$, 
the subset $\Stab_G(x)$ of $G$ consisting of elements stabilizing $x$ 
is a subgroup of $G$: 
 \[
  \Stab_G(x) = \{ \sigma \in G \mid 
  \sigma\cdot x = x \}. 
 \]
 This group $\Stab_G(x)$ is called the stabilizer subgroup of $G$ 
 with respect to $x$.

Next, we introduce the notion of orbits by the group actions. 
An orbit is the set of elements obtained by transformations of 
a fixed base element. 
Let $G$ be a group acting on $X$ from left. Then, for $x\in X$, we define the 
$G$-orbit $O_x$ of $x$ as 
\[
 O_x = G \cdot x = \{ \sigma\cdot x \mid \sigma\in G \}. 
\]
Then, it is easy to demonstrate that $X$ can be decomposed into a disjoint union of the $G$-orbits 
$X = \bigsqcup_{i \in I} O_{x_i}$.
We call this the $G$-orbit decomposition of $X$. 
We remark that the index set $I$ might be infinite. 

\begin{example}\label{ex:orbits}
(i) We consider the permutation action of $S_n$ on 
$X = \{1,2, \dots,  n \}$. Then, the orbit of $1\in X$ is same as 
$X$, i.e., $X = O_1 = S_n\cdot 1$. 

(ii) The orbit of $\bm{X} \in \RR^2$ of the action of $\SO_2(\RR)$ on 
$X = \RR^2$ is same as 
\[
 O_{\bm{X}} = \begin{cases} 
 	\{ (a,b) \in \RR^2 \mid a^2 + b^2 = \|\bm{X} \|_2^2 \} & 
 	\text{if} \ \bm{X} \neq \bm{0}, \\ 
 	\{ \bm{0} \} & 
 	\text{if} \ \bm{X} = \bm{0}. 
 \end{cases}
\]
When $\|\bm{X}\|_2 = r$, then we can represent the orbits as 
$O_{(r,0)} = O_{\bm{X}}$. 
Hence, the orbit decomposition of $X = \RR^2$ is same as $X = \bigsqcup_{r\geq 0} O_{(r,0)}$.
\end{example}

Let $H$ be a subgroup of a finite group $G$. Then, $H$ acts on $G$ 
from left and right by the product: For $\tau\in H$ and $\sigma\in G$, $\tau$ acts on $\sigma$ from left (resp. right) by 
$\sigma\mapsto \tau\sigma$ (resp. $\sigma \mapsto\sigma\tau$). 
 For any $\sigma \in G$, we define the set  
\[
 \sigma H = \{ \sigma\tau\in G \mid \tau \in H\}. 
\]
This is nothing but the right $H$-orbit of $\sigma$ for the above right 
action of $H$ on $G$. 
This is called the left coset of $H$ with respect to $\sigma$. 
Because the left coset is identical to the $H$-orbit for the 
right action of $H$,  
we can decompose $G$ into a disjoint union of the left cosets as 
a $H$-orbit decomposition: 
\[
 G = \bigsqcup_{i\in I} \sigma_iH. 
\]
We refer to this as the left coset decomposition of $G$ by $H$. 
We define $G/H$ as the set of the left cosets of $G$ by $H$: 
\[
  G/H = \{ \sigma_iH \mid i \in I\}.
\]
The right cosets, the right coset decomposition, and the set of 
right cosets $H\!\setminus\! G$ are also defined similarly. 

Then, a relation exists between an orbit and a set of cosets:  
\begin{prop}\label{prop:orbit-stab-bijectivity}
Let $G$ be a group acting on a set $X$ from left. For any $x\in X$, 
the map 
\[
 G/\Stab_G(x) \to O_x; \ \sigma\Stab_G(x) \to \sigma \cdot x  
\]
is bijective. In particular, if 
$G/\Stab_G(x) = \{ \sigma_i \Stab_G(x) \mid i \in I \}$, then the orbit 
of $x\ in X$ can be written by $O_x = \{ \sigma_i\cdot x \mid i\in I\}$.
\end{prop}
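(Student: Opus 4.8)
The plan is to prove that the map $\varphi\colon G/\Stab_G(x)\to O_x$ sending $\sigma\Stab_G(x)$ to $\sigma\cdot x$ is a well-defined bijection; this is the classical orbit--stabilizer correspondence. The whole argument rests on a single chain of equivalences: for $\sigma,\tau\in G$ one has $\sigma\cdot x=\tau\cdot x$ if and only if $\tau^{-1}\sigma\in\Stab_G(x)$, which (since $\Stab_G(x)$ was already shown to be a subgroup) holds if and only if the left cosets $\sigma\Stab_G(x)$ and $\tau\Stab_G(x)$ coincide. Establishing this equivalence cleanly is the conceptual core; everything else is bookkeeping.

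First I would verify well-definedness. Because $\varphi$ is specified on a coset through a choice of representative, I must check that $\sigma\cdot x$ is independent of that choice. Assuming $\sigma\Stab_G(x)=\tau\Stab_G(x)$, the coset-equality criterion gives $\tau^{-1}\sigma\in\Stab_G(x)$, hence $(\tau^{-1}\sigma)\cdot x=x$; applying $\tau$ and using the left-action axiom $(\tau\rho)\cdot x=\tau\cdot(\rho\cdot x)$ yields $\sigma\cdot x=\tau\cdot x$. So $\varphi$ is well-defined.

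Next I would treat injectivity and surjectivity. For injectivity, suppose $\varphi(\sigma\Stab_G(x))=\varphi(\tau\Stab_G(x))$, i.e.\ $\sigma\cdot x=\tau\cdot x$; acting by $\tau^{-1}$ gives $(\tau^{-1}\sigma)\cdot x=x$, so $\tau^{-1}\sigma\in\Stab_G(x)$, which is exactly the statement that the two cosets agree. This is literally the well-definedness computation read backwards, so I would prove the equivalence once and invoke it in both directions. Surjectivity is immediate from the definition $O_x=\{\sigma\cdot x\mid\sigma\in G\}$: each element of $O_x$ is $\sigma\cdot x=\varphi(\sigma\Stab_G(x))$ for some $\sigma\in G$. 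Combining the three properties gives the bijection.

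Finally, the ``in particular'' clause costs nothing more: if $\{\sigma_i\Stab_G(x)\mid i\in I\}$ enumerates $G/\Stab_G(x)$, then transporting along the bijection $\varphi$ yields $O_x=\{\sigma_i\cdot x\mid i\in I\}$. I do not anticipate any genuine obstacle, as the result is elementary; the only point demanding care is recognizing that well-definedness and injectivity are the two directions of one coset-membership equivalence, which should be isolated and reused rather than reproved.
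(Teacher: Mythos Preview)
Your proposal is correct and is precisely the standard verification the paper has in mind; the paper's own proof consists of the single sentence ``It is easy to check well-definedness and bijectivity,'' and your argument is exactly that check carried out in full.
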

\begin{proof}
It is easy to check well-definedness and bijectivity.  
\end{proof}

\begin{example}
(i) For $G = S_n$ acting on $\{1,2, \dots, n \}$, 
the $G$-orbit of $1$ was only one, i.e.,  
\[
O_1 = \sigma\cdot 1 = \{ 1,2, \dots, n \}. 
\]
Hence, by Proposition~\ref{prop:orbit-stab-bijectivity}, 
the following map is bijective: 
\[
  S_n/\Stab_{S_n}(1) \to \{1,2,\dots, n \}; \ \sigma\Stab_{S_n}(1) \mapsto 
 \sigma(1). 
\]

\noindent
(ii) For $G = \SO_2(\RR)$, the orbit could be represented by 
$O_{(r,0)} = \SO_2(\RR)\cdot (r,0)$ for $r\geq 0$. 
Then, the stabilizer subgroup of $\SO_2(\RR)$ for 
$(r,0)$ is 
\[
 \Stab_{\SO_2(\RR)}((r,0)) = \begin{cases}
    \{ I \} & \text{if} \ r > 0, \\ 
    \SO_2(\RR) & \text{if} \ r=0. 
 \end{cases}
\]
By Proposition~\ref{prop:orbit-stab-bijectivity}, the map 
\begin{align*}
 \SO_2(\RR)/\Stab_{\SO_2(\RR)}((r,0)) \to O_{(r,0)};  \ 
 \sigma\Stab_{\SO_2(\RR)}((r,0)) \mapsto \sigma\cdot (r,0)
\end{align*}
is bijective for any $r\geq 0$. Indeed, this is compatible with 
Example~\ref{ex:orbits} (ii).  
\end{example}

One reason for the importance of the permutation group $S_n$ 
is the following proposition:  
\begin{prop}\label{prop:embedding-to-symmetric-group}
Any finite group $G$ can be realized as 
a subgroup of $S_n$ for some positive integer $n$. 
\end{prop}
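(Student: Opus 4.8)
The plan is to prove this as an instance of \emph{Cayley's theorem}, realizing $G$ inside the symmetric group on its own underlying set via the left regular action. First I would let $G$ act on itself by left multiplication: for each $\sigma \in G$ define the map $L_\sigma \colon G \to G$ by $L_\sigma(\tau) = \sigma\tau$. Using associativity together with the inverse and unit axioms, $L_{\sigma^{-1}}$ is a two-sided inverse of $L_\sigma$, so each $L_\sigma$ is a bijection and hence an element of $\Sym(G)$, the group of all permutations of the set $G$.

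Next I would check that the assignment $L \colon \sigma \mapsto L_\sigma$ is a group homomorphism from $G$ into $\Sym(G)$. This is a direct computation: for any $\rho \in G$ we have $L_{\sigma\tau}(\rho) = (\sigma\tau)\rho = \sigma(\tau\rho) = L_\sigma(L_\tau(\rho))$, so $L_{\sigma\tau} = L_\sigma \circ L_\tau$. To obtain an embedding it then remains to verify injectivity. If $L_\sigma = \id$, then evaluating at the unit gives $\sigma = \sigma\epsilon = L_\sigma(\epsilon) = \epsilon$; thus the kernel of $L$ is trivial and $L$ is injective. Consequently $G$ is isomorphic to its image $L(G)$, which is a subgroup of $\Sym(G)$.

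Finally, since $G$ is finite I would set $n = |G|$ and fix any enumeration $G = \{\sigma_1, \dots, \sigma_n\}$. This bijection $G \cong \{1, 2, \dots, n\}$ induces an isomorphism $\Sym(G) \cong S_n$, and composing with $L$ realizes $G$ as a subgroup of $S_n$, as claimed.

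I do not expect a genuine obstacle here, as this is a foundational result; the only point requiring a little care is the faithfulness (injectivity) of the left regular action, which rests solely on the cancellation afforded by the group inverse. Notably, finiteness of $G$ plays no role in the embedding into $\Sym(G)$; it is used only at the last step to identify $\Sym(G)$ with the concrete permutation group $S_n$ for $n = |G|$.
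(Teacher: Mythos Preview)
Your proof is correct and follows essentially the same approach as the paper: both realize $G$ inside $S_n$ with $n=|G|$ via the left regular action $\sigma\mapsto(\tau\mapsto\sigma\tau)$, verifying that this is an injective group homomorphism. The paper's version is slightly terser (it fixes the enumeration first and defines the permutation of indices directly), while you work abstractly in $\Sym(G)$ before identifying it with $S_n$, but the content is identical.
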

\begin{proof}
Let $n:= |G|$ and $G = \{ \sigma_1, \sigma_2, \dots, \sigma_n \}$. For any $\sigma\in G$, 
$\sigma \sigma_i = \sigma_j$ for some $j \in \{ 1,2, \dots, n\}$, because 
the product $\sigma \sigma_i$ is in $G$. 
We set $\tau_\sigma (i) = j$. Then, we define a map 
$\tau \colon G \to S_n; \sigma \mapsto \tau_\sigma$. This map is injective and preserves the product, i.e., $\tau_{\sigma\sigma'} = \tau_\sigma\tau_{\sigma'}$. Through this map, $G$ can be regarded as 
a subgroup of $S_n$. 
\end{proof}

\section{Theoretical explanation of Remark~\ref{rem:difference-of-models actions}}\label{sec:Theoretical explanation of Remark 1} 

To explain the architecture of our models, we introduce some notions of representation theory. 

Let $G$ be a finite group, $V$ be a vector space over $\RR$, and $\GL(V)$ be the group of the automorphisms of $V$. Here, an automorphism of $V$ is a linear bijection from $V$ to $V$. 
Then, a group homomorphism $\rho\colon G \to \GL(V)$ is called a (linear) representation of $G$ on $V$. 
Defining a representation $\rho\colon G \to \GL(V)$ is equivalent to defining an action of $G$ on $V$ by $\rho$. 
Then, it is also known that considering a representation $\rho\colon G \to \GL(V)$ is equivalent to considering an $\RR[G]$-module $V$. 
Here, $\RR[G]$ is the group algebra defined as 
\[
 \RR[G] = \left\{ \left. \sum_{\sigma \in G} a_\sigma \sigma \ \right| \  a_\sigma \in \RR  \right\}\,.
\]
Moreover, for an algebra $A$ with unit $1$, an $A$-module $M$ is an abelian group with summation $+$, endowed with a ``scalar product'' by $A$ , denoted as $\cdot$, satisfying 
\begin{align*}
	\sigma\cdot(x + x') &= \sigma\cdot x + \sigma \cdot x', \\
		(\sigma+\tau)\cdot x &= \sigma\cdot x + \tau \cdot x, \\
	(\sigma\tau)\cdot x &= \sigma\cdot (\tau \cdot x), \\
		1 \cdot x &= x
\end{align*}
for $\sigma, \tau \in A$ and $x, x' \in M$. 
This means that an $A$-module is an analog of a vector space with the scalar product by algebra $A$. When a representation $\rho\colon G \to \GL(V)$ is given, $V$ can be regarded as an $\RR[G]$-module by the action 
\[
 \left( \sum_{\sigma \in G} a_\sigma \sigma \right)\cdot v = \sum_{\sigma \in G} a_\sigma \rho(\sigma)(v). 
\]
for $\sum_{\sigma \in G} a_\sigma \sigma \in \RR[G]$ and $v\in V$.

Two representations $(\rho, V), (\rho', V')$ of a group $G$ are called isomorphic, denoted $(\rho,V)\simeq (\rho', V')$ if there is a linear isomorphism $\Xi \colon V \to V'$ satisfying the commutativity 
\begin{align}
 \Xi(\rho(\sigma)(v)) = \rho'(\sigma)(\Xi(v)) \label{eq:commutativity}
\end{align}
for any $\sigma \in G$ and $v \in V$. A linear map (not necessarily isomorphism) satisfying commutativity~\eqref{eq:commutativity} is called an intertwining operator. We set $\Hom_G(V, V')$ as the set of intertwining operators from $(\rho, V)$ to $(\rho', V')$.

For a subgroup $H\subset G$ and a representation $\rho\colon G \to \GL(V)$, the restriction map $\rho|_H \colon H \to \GL(V)$ is a representation of $H$ on $V$. 
This representation is called the restricted representation of $\rho$ to $H$ and denoted by $\Res^G_H(\rho)$. 
Then, considering the restricted representation $\rho|_H$ is equivalent to considering $V$ as an $\RR[H]$-module. 

On the other hand, for a subgroup $H\subset G$ and a representation $\rho' \colon H \to \GL(W)$ of $H$ on $W$, the coefficient extension of $\RR[H]$-module $W$ to $\RR[G]$  
\[
	\RR[G]\otimes_{\RR[H]} W
\]
is an $\RR[G]$-module. Here,  the tensor product $\RR[G]\otimes_{\RR[H]} W$ is defined as the quotient $\left(\RR[G]\times W\right)\!/\!\sim$ of the direct product $\RR[G]\times W$, by the equivalent relation $\sim$ which is defined for $\sigma, \sigma' \in \RR[G]$, $\tau \in H$, $w, w' \in W$ and $c\in \RR$ by:
\begin{align*}
	(\sigma + \sigma', w) &\sim (\sigma, w) + (\sigma', w), \\
	(c\sigma, w) &\sim (\sigma, cw), \\
	(\sigma, w + w') &\sim (\sigma, w) + (\sigma, w'), \\
	(\sigma \tau, w) &\sim (\sigma, \rho'(\tau)(w))\,. 
\end{align*}

Then, the action of $\sigma' \in G$ on an element $\sigma \otimes w \in \RR[G]\otimes_{\RR[H]}W$ is defined by 
\[
	\sigma'(\sigma \otimes w) = (\sigma'\sigma) \otimes w. 
\]
We call the representation defined by this the induced representation of $\rho'$ to $G$ and denote by $\Ind_H^G(\rho')$. 
We remark that the following equality holds: 
\[
	\sigma \tau \otimes w = \sigma \otimes \rho'(\tau)(w)
\]
for any $\sigma \in G$, $\tau \in H$, $w\in W$. 
Let $G = \bigsqcup_{i=1}^m H\sigma_i$ be the right coset decomposition of $G$ by $H$. Then, we have the directed sum decomposition 
\begin{align}
	 \RR[G] = \bigoplus_{i=1}^m \sigma_i^{-1} \RR[H]. \label{eq:directed-sum-dec}
\end{align}
By~\eqref{eq:directed-sum-dec}, we have 
\[
 \RR[G]\otimes_{\RR[H]} W = \left (\bigoplus_{i=1}^m \sigma_i^{-1} \RR[H] \right) \otimes_{\RR[H]} W = \bigoplus_{i=1}^m \sigma_i^{-1} \otimes_{\RR[H]} W
 \simeq W^m
\]
Therefore, the dimension of $\RR[G]\otimes_{\RR[H]}W$ is $m \dim W = (G:H) \dim W$, where $(G:H)$ is the index of $H$ in $G$.

Let $G\subset S_n$ be a finite group, $F \colon V^n \to W^n$ be a $G$-equivariant continuous map, and let $Y = \{1,2,\dots, n \} = \bigsqcup_{i=1}^m O_{y_i}$ be a $G$-orbit decomposition with $O_{y_i} = \{ \sigma^{-1}(y_i) \mid \sigma \in G \}$ of $y_i\in Y$. 
By Theorem~\ref{thm:main}, we have $\Phi(F) = (\wt{F}_i)_{i=1}^m$ for the $\Stab_G(y_i)$-invariant continuous map $\wt{F}_i(\cdot) = F(\cdot)(y_i)$. 
Then, we have an approximator $\wh{\wt{F}}_i\colon V^n \to W$ as a $\Stab_G(y_i)$-invariant deep neural network of the $\Stab_G(y_i)$-invariant map $\wt{F}_i$ (cf. \citet{segol2019universal}, \citet{ravanbakhsh2020universal}) and, by Theorem~\ref{thm:approx}, can reconstruct an approximator $\wh{F} = \Psi((\wh{\wt{F}}_i)_{i=1}^m)$. 

Let $G = \bigsqcup_{j=1}^{\ell_i} \Stab_G(y_i) \sigma_{ij}$ be a right coset decomposition for $\Stab_G(y_i)$. Then, by Proposition~\ref{prop:orbit-stab-bijectivity},  we have $O_{y_i} = \{ \sigma_{ij}^{-1}(y_i) \mid j = 1, \dots, \ell_i \}$. 
By using this, an approximator $\wh{F}$ of $F$ is constructed from $(\wh{\wt{F}}_i)_{i=1}^m$ by   
\begin{align}
	 \wh{F}(\bm{X}) = ((\wh{\wt{F}}_i(\sigma_{ij}\cdot \bm{X} )_{j=1}^{\ell_i})_{i=1}^m  \label{eq:reconstruction-approximator}
\end{align}
for $\bm{X} = (\bm{x}_1, \dots, \bm{x}_n) \in V^n$. 

We focus on a $\Stab_G(y_i)$-invariant approximator $\wh{\wt{F}}_i \colon V^n \to W$. By \citet{segol2019universal}, we assume that $\wh{\wt{F}}_i$ can be realized as a deep neural network model as 
\[
 V^n \stackrel{\varphi^{(i)}_1}{\longrightarrow} (V_1^{(i)})^n \stackrel{\varphi^{(i)}_2}{\longrightarrow} (V^{(i)}_2)^n \stackrel{\varphi^{(i)}_3}{\longrightarrow} \dots \stackrel{\varphi^{(i)}_{d}}{\longrightarrow} (V^{(i)}_{d})^n \stackrel{\varphi^{(i)}_{d+1}}{\longrightarrow} (V^{(i)}_{d+1})^n = W^n, 
\]
where $V^{(i)}_k$ is a finite dimensional vector space over $\RR$ and the $\varphi^{(i)}_k$ is a map defined by 
\[
 \varphi^{(i)}_k(\bm{X}) = \ReLU(W^{(i)}_k\bm{X} + \bm{B}^{(i)}_k)
\]
for $\bm{X} \in (V_{k-1}^{(i)})^n$, $W^{(i)}_k\in \RR^{n\dim V_{k}^{(i)} \times n\dim V_{k-1}^{(i)}}$, and $\bm{B}^{(i)}_k = (\bm{b}^{(i)}_{k1}, \dots, \bm{b}^{(i)}_{kn}) \in (V_{k}^{(i)})^n$ such that $W^{(i)}_k$ (resp. $\bm{B}^{(i)}_k$) is $\Stab_G(y_i)$-equivariant (resp. $\Stab_G(y_i)$-invariant), i.e., for any $\bm{X} \in (V_{k-1}^{(i)})^n$ and any $\sigma \in \Stab_G(y_i)$, 
\[
 W^{(i)}_k (\sigma\cdot \bm{X}) = \sigma \cdot (W^{(i)}_k\bm{X}), \ \ \sigma \cdot \bm{B}^{(i)}_k = \bm{B}^{(i)}_k
\]
holds. 
Here, $\Stab_G(y_i)\subset G \subset S_n$ acts on $(V^{(i)}_k)^n$ by the restriction of the permutation action.

By combining the~\eqref{eq:reconstruction-approximator} and this notation, the weight matrix from the input layer $V^n$ to the first hidden layer $\bigoplus_{i=1}^m \bigoplus_{j=1}^{\ell_i} (V^{(i)}_1)^n$ of $\wh{F}$ is 
\[
 ((W^{(i)}_1 \sigma_{ij})_{j=1}^{\ell_i})_{i=1}^m. 
\]
Then, by the action of $\sigma \in G$, %
we have 
\begin{align*}
	W^{(i)}_k \sigma_{ij}(\sigma \cdot \bm{X}) &= W^{(i)}_k ((\sigma_{ij}\sigma) \cdot \bm{X})\\
	&= W^{(i)}_k( (\tau\sigma_{ij'}) \cdot \bm{X}) \\ 
	&= W^{(i)}_k \tau (\sigma_{ij'} \cdot \bm{X}) \\ 
	&= \tau W^{(i)}_k (\sigma_{ij'} \cdot \bm{X}), 
\end{align*}
where $j'\in Y$ and $\tau \in \Stab_G(y_i)$ are determined by $\sigma_{ij}\sigma = \tau \sigma_{ij'} \in \Stab_G(y_i)\sigma_{ij'}$. 

In particular, by the action of $\sigma \in G$, the $(i,j)$-th part $W^{(i)}_k \sigma_{ij}(\bm{X}) \in (V_{k}^{(i)})^n$ is replaced by $(i,j')$-th part $\tau W^{(i)}_k (\sigma_{ij'} \cdot \bm{X})$ with the action of $\tau=\sigma_{ij}\sigma\sigma_{ij'}^{-1} \in \Stab_G(y_i)$. 
From this observation, we define the action of $G$ on the $i$-th part $\bigoplus_{j=1}^{\ell_i} (V^{(i)}_k)^n$ as follows: 
For $\sigma\in G$ and $j = 1, 2, \dots, \ell_i$, we define $\phi^{(i)}_{\sigma}(j)\in \{1, 2, \dots, \ell_i \}$ such that  
\begin{align}
	\sigma_{ij} \sigma = \tau \sigma_{i\phi^{(i)}_{\sigma}(j)} \in H_i \sigma_{i\phi^{(i)}_{\sigma}(j)}.\label{eq:def-of-phi-index}
\end{align}
Then, for $\bm{V}^{(i)} = (\bm{v}^{(i)}_j)_{j=1}^{\ell_i} \in \bigoplus_{j=1}^{\ell_i} (V^{(i)}_k)^n\subset \bigoplus_{i=1}^{m}\bigoplus_{j=1}^{\ell_i} (V^{(i)}_k)^n$, we define the action of $\sigma\in G$ by 
\begin{align}
	\sigma\ast \bm{V}^{(i)} = ((\sigma_{ij}\sigma \sigma_{i\phi^{(i)}_{\sigma}(j)}^{-1})\cdot \bm{v}^{(i)}_{\phi^{(i)}_{\sigma}(j)})_{j=1}^{\ell_i}. \label{eq:def-of-action}
\end{align}

The following theorem shows that this action $\ast$ of $G$ is equivalent to the induced representation of the restricted representation on $(V^{(i)}_k)^n$.  
\begin{theorem}\label{thm:isomorphy-representation}
For our proposed model, we define the action of $G$ on the $k$-th layer 
$\bigoplus_{i=1}^{m}\bigoplus_{j=1}^{\ell_i} (V^{(i)}_k)^n$ by the action~$\ast$ defined in \eqref{eq:def-of-action}. 
Then, the representation defined by the action $\ast$ is isomorphic to the sum of the induced representation of the restrictions 
\[
	\bigoplus_{i=1}^m \Ind_{\Stab_G(y_i)}^G(\Res_{\Stab_G(y_i)}^G((V^{(i)}_k)^n)). 
\]
Moreover, for any $k\geq 1$,  any affine maps from $(k-1)$-th layer to $k$-th layer are $G$-equivariant for the action $\ast$. (Only for the input layer, the action is the restriction of the permutation action to $G$.)
\end{theorem}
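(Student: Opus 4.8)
The plan is to prove the two assertions separately: first I would exhibit an explicit $G$-equivariant linear isomorphism realizing the claimed decomposition, and then verify directly that each inter-layer affine map intertwines the action $\ast$. Throughout I write $H_i = \Stab_G(y_i)$.

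For the isomorphism, since both sides split as direct sums over the orbit index $i$, I would work one $i$ at a time. Recalling the right coset decomposition $G = \bigsqcup_{j=1}^{\ell_i} H_i\sigma_{ij}$, the directed sum decomposition~\eqref{eq:directed-sum-dec} identifies $\Ind_{H_i}^G(\Res_{H_i}^G((V^{(i)}_k)^n))$ canonically with $\bigoplus_{j=1}^{\ell_i}\sigma_{ij}^{-1}\otimes (V^{(i)}_k)^n$. I would then define
\[
 \Xi_i\colon \bigoplus_{j=1}^{\ell_i}(V^{(i)}_k)^n \longrightarrow \bigoplus_{j=1}^{\ell_i}\sigma_{ij}^{-1}\otimes (V^{(i)}_k)^n, \qquad (\bm{v}_j)_{j} \mapsto \sum_{j=1}^{\ell_i}\sigma_{ij}^{-1}\otimes \bm{v}_j,
\]
which is a linear isomorphism by that same decomposition, and set $\Xi = \bigoplus_i \Xi_i$. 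The core computation is that $\Xi_i$ intertwines $\ast$ with the induced-representation action. Applying $\Xi_i$ to $\sigma\ast\bm{V}^{(i)}$ from~\eqref{eq:def-of-action} and using the defining relation $(\sigma\tau)\otimes w = \sigma\otimes(\tau\cdot w)$ to absorb the factor $\tau = \sigma_{ij}\sigma\sigma_{i\phi^{(i)}_\sigma(j)}^{-1}\in H_i$ across the tensor, the $j$-th summand collapses to $(\sigma\sigma_{i\phi^{(i)}_\sigma(j)}^{-1})\otimes\bm{v}_{\phi^{(i)}_\sigma(j)}$. Because right multiplication by $\sigma$ permutes the right cosets, $\phi^{(i)}_\sigma$ is a permutation of $\{1,\dots,\ell_i\}$, so re-indexing the sum by $j' = \phi^{(i)}_\sigma(j)$ yields exactly $\sigma\cdot\Xi_i(\bm{V}^{(i)})$, the induced action. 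Summing over $i$ gives the claimed isomorphism.

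For the equivariance of the inter-layer maps, I would check the deep layers and the input layer separately. For $k\ge 2$ the map is block-diagonal, acting on the $(i,j)$-block by $\varphi^{(i)}_k(\bm{X}) = \ReLU(W^{(i)}_k\bm{X} + \bm{B}^{(i)}_k)$. Taking the $(i,j)$-component of $\varphi_k(\sigma\ast\bm{V})$, the incoming block is $\tau\cdot\bm{v}^{(i)}_{j'}$ with $j'=\phi^{(i)}_\sigma(j)$ and $\tau\in H_i$; pushing $\tau$ through via $\Stab_G(y_i)$-equivariance of $W^{(i)}_k$, $\Stab_G(y_i)$-invariance of $\bm{B}^{(i)}_k$, and the fact that $\ReLU$ commutes with the coordinate-permuting $H_i$-action, produces $\tau\cdot\varphi^{(i)}_k(\bm{v}^{(i)}_{j'})$, which is precisely the $(i,j)$-component of $\sigma\ast\varphi_k(\bm{V})$. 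The input layer is handled by the same manipulation, except that its $(i,j)$-block already carries the twist $\sigma_{ij}$ as in~\eqref{eq:reconstruction-approximator}, so one feeds $(\sigma_{ij}\sigma)\cdot\bm{X}$ into $W^{(i)}_1$ and factors $\sigma_{ij}\sigma = \tau\sigma_{ij'}$ exactly as in the computation preceding the theorem; this twist is what forces the input action to be the plain permutation action rather than $\ast$.

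The main obstacle is the combinatorial bookkeeping of the index permutation $\phi^{(i)}_\sigma$ and the consistency of the coset-representative choices $\sigma_{ij}$: one must confirm that the factor $\tau = \sigma_{ij}\sigma\sigma_{i\phi^{(i)}_\sigma(j)}^{-1}$ genuinely lies in $H_i$, so that the tensor relations and the invariance/equivariance hypotheses on $W^{(i)}_k$ and $\bm{B}^{(i)}_k$ apply, and that the re-indexing $j\mapsto\phi^{(i)}_\sigma(j)$ is consistent throughout. Note that once the pointwise intertwining identity $\Xi_i(\sigma\ast\bm{V})=\sigma\cdot\Xi_i(\bm{V})$ is established for every $\sigma$, the fact that $\ast$ is a genuine $G$-action comes for free, since $\Xi_i$ transports the action-axioms from the honest induced representation; thus no separate cocycle verification for $\ast$ is needed. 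Once this index compatibility is pinned down, both parts reduce to the routine algebraic identities above.
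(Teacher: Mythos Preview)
Your proposal is correct and follows essentially the same route as the paper: you define the same isomorphism $\Xi_i\colon(\bm{v}_j)_j\mapsto\sum_j\sigma_{ij}^{-1}\otimes\bm{v}_j$, verify intertwining by absorbing $\tau=\sigma_{ij}\sigma\sigma_{i\phi^{(i)}_\sigma(j)}^{-1}\in H_i$ across the tensor and re-indexing via the bijection $\phi^{(i)}_\sigma$, and then handle the inter-layer equivariance directly. Your treatment of the second assertion is in fact more detailed than the paper's, which simply states that the equivariance of the affine maps follows from the definition of~$\ast$; your explicit separation of the deep layers from the twisted input layer, together with the remark that $\ast$ inherits the group-action axioms from the induced representation via $\Xi_i$, are useful clarifications the paper omits.
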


\begin{proof}
We set $H_i = \Stab_G(y_i)$ and $G=\bigsqcup_{j=1}^{\ell_i}H_i \sigma_{ij}$.  
For $i$-th part, the induced representation of the restricted representation of $(V^{(i)}_k)^n$ can be regarded as the $\RR[G]$-module 
\[
 \RR[G]\otimes_{\RR[H_i]} (V^{(i)}_k)^n = \bigoplus_{j = 1}^{\ell_i} \sigma_{ij}^{-1}\otimes (V^{(i)}_k)^n.  
\]
Then, by the action of $\sigma$, an element $\sigma_{ij}^{-1}\otimes \bm{v} \in \bigoplus_{j = 1}^{\ell_i}\sigma_{ij}^{-1}\otimes (V^{(i)}_k)^n$ is changed as 
\begin{align}
 \sigma\cdot (\sigma_{ij}^{-1}\otimes \bm{v})& = (\sigma\sigma_{ij}^{-1}) \otimes \bm{v} = (\sigma_{ij}\sigma^{-1})^{-1} \otimes \bm{v} \notag \\ &=(\tau\sigma_{ij'})^{-1} \otimes \bm{v} = \sigma_{ij'}^{-1} \tau^{-1} \otimes \bm{v} 
 = \sigma_{ij'}\otimes (\tau^{-1} \cdot \bm{v}),   	 \label{eq:def-of-action-ind-res}
\end{align}
where $j'\in Y$ and $\tau \in H_i$ are determined by $\sigma_{ij}\sigma^{-1} = \tau\sigma_{ij'} \in H_i \sigma_{ij'}$. 
We remark that this $j'$ is equal to $\phi^{(i)}_{\sigma^{-1}}(j)$ by the notation defined in~\eqref{eq:def-of-phi-index}.  
This means that by the action of $\sigma \in G$, the $(i,j)$-th part $\sigma_{ij}^{-1}\otimes \bm{v}$ is replaced to the $(i, \phi^{(i)}_{\sigma^{-1}}(j))$-part $\sigma_{i\phi^{(i)}_{\sigma^{-1}}(j)}\otimes (\tau \cdot \bm{v})$ with the action of $\tau = \sigma_{ij}\sigma^{-1}\sigma_{i\phi^{(i)}_{\sigma^{-1}}(j)}^{-1}  \in H_i$. 

We define the map $\Xi$ from the $i$-th part of $k$-th hidden layer $\bigoplus_{j=1}^{\ell_i} (V^{(i)}_k)^n$ to $\RR[G]\otimes_{\RR[H_i]}(V^{(i)}_k)^n= \bigoplus_{j = 1}^{\ell_i} \sigma_{ij}^{-1}\otimes (V^{(i)}_k)^n$ by 
\begin{align*}
	\Xi\colon \bm{V}^{(i)} = (\bm{v}^{(i)}_j)_{j=1}^{\ell_i} \longmapsto \sum_{j = 1}^{\ell_i} \sigma_{ij}^{-1}\otimes \bm{v}^{(i)}_j. 
\end{align*}
By definition, this map is bijective and equivariant for the action $\ast$ of $\sigma \in G$ on $\bigoplus_{j=1}^{\ell_i} (V^{(i)}_k)^n$ by \eqref{eq:def-of-action} and on $\RR[G]\otimes_{\RR[H_i]}(V^{(i)}_k)^n$ by \eqref{eq:def-of-action-ind-res}. 
Indeed, we have  %
\begin{align*}
	\Xi(\sigma\ast \bm{V}^{(i)}) & = \Xi(((\sigma_{ij}\sigma \sigma_{i\phi^{(i)}_{\sigma}(j)}^{-1})\cdot \bm{v}^{(i)}_{\phi^{(i)}_{\sigma}(j)})_{j=1}^{\ell_i}) \\
	& = \sum_{j=1}^{\ell_i} \sigma_{ij}^{-1} \otimes (\sigma_{ij}\sigma \sigma_{i\phi^{(i)}_{\sigma}(j)}^{-1})\cdot \bm{v}^{(i)}_{\phi^{(i)}_{\sigma}(j)}\\
	& = \sum_{j=1}^{\ell_i} \sigma_{ij}^{-1} (\sigma_{ij}\sigma \sigma_{i\phi^{(i)}_{\sigma}(j)}^{-1}) \otimes \bm{v}^{(i)}_{\phi^{(i)}_{\sigma}(j)}\\
	& = \sum_{j=1}^{\ell_i} (\sigma \sigma_{i\phi^{(i)}_{\sigma}(j)}^{-1}) \otimes  \bm{v}^{(i)}_{\phi^{(i)}_{\sigma}(j)}\\
	& =  \sigma \sum_{j=1}^{\ell_i} \sigma_{i\phi^{(i)}_{\sigma}(j)}^{-1} \otimes  \bm{v}^{(i)}_{\phi^{(i)}_{\sigma}(j)}\\ 
	& =  \sigma \sum_{j=1}^{\ell_i} \sigma_{ij}^{-1} \otimes  \bm{v}^{(i)}_{j} = \sigma\cdot \Xi(\bm{V}^{(i)}). 
\end{align*}
This means that $\bigoplus_{j=1}^{\ell_i} (V^{(i)}_k)^n$ and $\RR[G]\otimes_{\RR[H_i]}(V^{(i)}_k)^n$ are isomorphic as $\RR[G]$-modules. 

The equivariance of affine maps is deduced by the definition of the action~$\ast$. 
This concludes the proof. 
\end{proof}

\section{Theoretical explanation of Remark~\ref{rem:difference-of-models parameters}}\label{sec:Theoretical explanation of Remark 2} 
In the remaining part, we argue about the number of parameters of the affine maps between layers. If a weight matrix between two layers of neural networks is equivariant by some $G$-action, this can be regarded as an intertwining operator. 
To count the number of parameters of intertwining operators, we need the notion of irreducible representations. We here review these notions briefly. 

For a representation $(\rho, V)$ of group $G$, we call a subspace $W \subset V$ $G$-invariant subspace if $\rho(\sigma)(W) \subset W$ for any $\sigma\in G$. Then, $(\rho, W)$ is also a representation of $G$. This representation $(\rho, W)$ is called a subrepresentation of $(\rho, V)$. 
When a representation $(\rho, V)$ has no nontrivial subrepresentation, this is called irreducible representation. This means that if $(\rho, W)$ is a subrepresentation of the irreducible representation $(\rho, V)$, then $(\rho, W)$ is equal to $(\rho, \{ 0\})$ or $(\rho, V)$. 
By Maschke's theorem \cite[(10.7),(10.8)]{curtis1966representation}, any finite-dimensional representation over $\RR$ can be factorized to the direct sum of irreducible representations up to isomorphic. In particular, irreducible representations are ``building blocks'' of representations. 

By Schur's Lemma \cite[(27.3)]{curtis1966representation}, for two irreducible representations $(\rho, V), (\rho'V')$, $\Hom_G(V, V')= 0$ holds if and only if $(\rho, V)$ is not  isomorphic to $(\rho', V')$. 
Thus, by this fact with Maschke's theorem, any intertwining operator can be factorized to the sum of intertwining operators between irreducible representations. 

For $G=S_n$, by the argument with Young's diagrams, we can calculate how the irreducible representations change by induction and restriction of them.  
By combining such argument and Theorem~\ref{thm:isomorphy-representation},   
we can calculate the number of parameters of the weight matrix between the input layer and the first hidden layer as follows: 
\begin{prop}
	For $G = S_n$, %
  $S_n$-equivariant continuous map $F\colon V^n\to W^n$ can be recovered by one $\Stab_{S_n}(1)$-invariant map $\wt{F}_1$ as $F=\Psi(\wt{F}_1)$. 
	Let $\wh{\wh{F}}_1$ be a universal approximator of $\wt{F}_1$ and $((V_k^{(1)})^n)^n$ be the $k$-th hiddent layer of $\wh{F}=\Psi(\wh{\wh{F}}_1)$.
 
	Then, the number of free parameters of the affine map from the input layer $V^n$ to the first hidden layer $((V^{(1)}_1)^n)^n$ is $5\dim V \dim  V^{(1)}_1 + 2\dim V^{(1)}_1$. 
	Moreover, the number of free parameters of the affine map from the $(k-1)$-th hidden layer $((V^{(1)}_{k-1})^n)^n$ to the $k$-th hidden layer $((V^{(1)}_k)^n)^n$ for $k=2, \dots, d$ is at most $21\dim V^{(1)}_{k-1} \dim  V^{(1)}_k + 2\dim V^{(1)}_k$.
\end{prop}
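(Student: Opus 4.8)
The plan is to recast every parameter count as the dimension of a space of intertwining operators and then evaluate those dimensions by decomposing the representations involved into irreducibles. By Theorem~\ref{thm:isomorphy-representation}, each affine map $\bm{X}\mapsto \ReLU(W\bm{X}+\bm{B})$ of $\wh{F}$ is $G$-equivariant for the action $\ast$, so $W$ is an intertwining operator between the source and target representations and $\bm{B}$ lies in the $\ast$-invariants of the target. Hence the number of free parameters of $W$ equals $\dim\Hom_{S_n}$ between those two representations, while the number of free parameters of $\bm{B}$ equals the dimension of the space of $\ast$-invariants of the target. First I would reduce both assertions to computing these dimensions, once for the input-to-first-hidden map and once for a generic hidden-to-hidden map.

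To make the representations explicit, write $P=\RR^n$ for the permutation representation of $S_n$, so that the input is $V^n\cong V\otimes P$ with $V$ carrying the trivial action and $P\cong \mathbf{1}\oplus S^{(n-1,1)}$ (trivial plus standard). By Theorem~\ref{thm:isomorphy-representation}, the $k$-th hidden layer equipped with $\ast$ is $\Ind_{\Stab_{S_n}(1)}^{S_n}\Res_{\Stab_{S_n}(1)}^{S_n}\bigl(V_k^{(1)}\otimes P\bigr)$, and the simplification I would exploit is the projection formula $\Ind_H^G(\Res_H^G U\otimes W)\cong U\otimes\Ind_H^G W$. With $U=P$ and $W=\mathbf{1}$ it gives $\Ind\Res P\cong P\otimes P$, so the hidden layer is isomorphic to $V_k^{(1)}\otimes P^{\otimes 2}$. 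Everything then reduces to decomposing $P$ and $P^{\otimes 2}$, which I would carry out with Young diagrams: using $S^{(n-1,1)}\otimes S^{(n-1,1)}\cong \mathbf{1}\oplus S^{(n-1,1)}\oplus S^{(n-2,2)}\oplus S^{(n-2,1,1)}$ one obtains $P^{\otimes 2}\cong 2\,\mathbf{1}\oplus 3\,S^{(n-1,1)}\oplus S^{(n-2,2)}\oplus S^{(n-2,1,1)}$.

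For the counting itself I would invoke Maschke's theorem to split both sides into irreducibles and Schur's lemma to evaluate the Hom-spaces. Because the irreducible representations of $S_n$ are absolutely irreducible over $\RR$, each matched irreducible contributes exactly one dimension and non-isomorphic ones contribute zero, so $\dim\Hom_{S_n}\bigl(\bigoplus_\tau a_\tau\tau,\bigoplus_\tau b_\tau\tau\bigr)=\sum_\tau a_\tau b_\tau$. For the first layer the source $V\otimes P$ has multiplicity $\dim V$ on each of $\mathbf{1}$ and $S^{(n-1,1)}$ and zero elsewhere, while the target $V_1^{(1)}\otimes P^{\otimes 2}$ has multiplicities $2\dim V_1^{(1)}$ and $3\dim V_1^{(1)}$ on those same two types; only these overlap, giving $(2+3)\dim V\,\dim V_1^{(1)}=5\dim V\,\dim V_1^{(1)}$ for $W$, and the bias contributes the multiplicity $2\dim V_1^{(1)}$ of $\mathbf{1}$ in the target, which is exactly the stated first count. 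For a hidden-to-hidden map both spaces are $V^{(1)}_\bullet\otimes P^{\otimes 2}$, so $W$ contributes $\dim V_{k-1}^{(1)}\dim V_k^{(1)}\sum_\tau m_\tau^2$ with $(m_\tau)=(2,3,1,1)$, and the bias again contributes $2\dim V_k^{(1)}$.

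The main obstacle is the branching bookkeeping together with pinning down the constants. Concretely I expect the delicate points to be: (i) justifying the real-type version of Schur's lemma, so that each matched irreducible of $S_n$ counts once rather than several times; and (ii) controlling the degenerate regime of small $n$, where partitions such as $(n-2,2)$ or $(n-2,1,1)$ become invalid and the corresponding summands simply drop out. Since dropping summands can only lower $\sum_\tau m_\tau^2$ below its generic value $2^2+3^2+1^2+1^2=15$, the weight contribution never exceeds $15\dim V_{k-1}^{(1)}\dim V_k^{(1)}$ and a fortiori never exceeds $21\dim V_{k-1}^{(1)}\dim V_k^{(1)}$. I would finish by verifying that $21\dim V_{k-1}^{(1)}\dim V_k^{(1)}+2\dim V_k^{(1)}$ dominates every case, which yields the claimed uniform upper bound and completes the proof.
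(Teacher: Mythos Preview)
Your proposal is correct and follows the same overall strategy as the paper---reduce to computing $\dim\Hom_{S_n}$ between the relevant representations via Maschke and Schur, then read off multiplicities from an irreducible decomposition---but you take a cleaner route at two points. First, instead of restricting $P$ to $S_{n-1}$ and then inducing back via the Young branching rules as the paper does, you invoke the projection formula to identify $\Ind_{S_{n-1}}^{S_n}\Res_{S_{n-1}}^{S_n}P$ directly with $P\otimes\Ind_{S_{n-1}}^{S_n}\mathbf{1}\cong P\otimes P$, which avoids the two-step branching bookkeeping and arrives at the same multiplicity vector $(2,3,1,1)$. Second, for the hidden-to-hidden count the paper does not use the fact that the Specht modules are absolutely irreducible over $\RR$; it only uses that $\End_{S_n}(V_\lambda)$ is a real division algebra, hence of dimension at most $4$, and this is precisely where the constant $21=2^2+3^2+4+4$ comes from. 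Your argument, using absolute irreducibility, gives the exact value $\sum m_\tau^2=15$, which is strictly sharper and of course still dominated by $21$. Your treatment of the bias as the multiplicity of the trivial representation in the target is also more conceptual than the paper's explicit coordinate-by-coordinate verification, though both yield $2\dim V_k^{(1)}$.
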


\begin{proof}
For $G=S_n$, $\Stab_G(1) \simeq S_{n-1}$ holds and the left coset decomposition is $S_n = \bigsqcup_{i=1}^n \Stab_G(1)(1 \ i)$. 
Thus, the first hidden layer is isomorphic to $\RR[S_n]\otimes_{\RR[S_{n-1}]} (V_1^{(1)})^n$ for a finite vector space $V^{(1)}_1$. 

By permutation, $S_n$ acts on the part $\RR^n$ part of $V^n\simeq \RR^n\otimes_\RR V$.  
By the argument of Young tableau (c.f. \cite[9.2]{james1987representation}), the permutation representation on $\RR^n$ is the sum of two irreducible representations corresponding to the partitions $\lambda_1 = (n)$ (trivial representation) and $\lambda_2 = (n-1, 1)$ of $n$. 
On the other hand, the representation on $\RR[S_n]\otimes_{\RR[S_{n-1}]} (V^{(1)}_1)^n \simeq \RR[S_n]\otimes_{\RR[S_{n-1}]}\RR^n \otimes_\RR V^{(1)}_1$ is the induced representation of the restricted representation of the permutation representation on $\RR^n$. 
By the argument of Young tableau again (c.f. \cite[9.2]{james1987representation}), the restricted representation is the sum of irreducible representations corresponding to the partitions $\lambda_{11} = (n-1)$, $\lambda_{21}=(n-1)$, and $\lambda_{22} = (n-2,1)$ of $n-1$. 
Then, the induced representation of it is the sum of irreducible representations corresponding to the partitions $\lambda_{111} = (n)$, $\lambda_{112} = (n-1,1)$, $\lambda_{211} = (n)$, $\lambda_{212} = (n-1,1)$, $\lambda_{221} = (n-1,1)$, $\lambda_{222} = (n-2,2)$, and $\lambda_{223} = (n-2,1,1)$. 
Let $V_\lambda$ be the irreducible representation over $\RR$ corresponding to the partition $\lambda$ of $n$. 
Then, we have the irreducible representation decomposition 
\begin{align}
	\RR[S_n]\otimes_{\RR[S_{n-1}]}\RR^n \simeq V_{(n)}^{\oplus 2}\oplus V_{(n-1,1)}^{\oplus 3}\oplus V_{(n-2,2)}\oplus V_{(n-2,1,1)}. \label{eq:irreducible-rep-decomposition}
\end{align}
The permutation representation can be decomposed into 
\[
 \RR^n \simeq V_{(n)}\oplus V_{(n-1,1)}. 
\]
We set $\Hom_G(V, V')$ as the set of $G$-equivariant linear maps from an $\RR[G]$-module $V$ to an $\RR[G]$-module $V'$. 
Then, by Schur's lemma (\cite[(27.3)]{curtis1966representation}), $\Hom_{S_n}(V_{\lambda}, V_{\lambda'}) = 0$ holds if $\lambda\neq \lambda'$. Moreover, it is also known that 
\begin{align}
	\dim_{\RR}\Hom_{S_n}(V_{(n)}, V_{(n)}) = \dim_{\RR}\Hom_{S_n}(V_{(n-1,1)}, V_{(n-1,1)})=1 \label{eq:dim-equal-to-one}
\end{align}
holds. Indeed, it is known that
\begin{align*}
 V_{(n)} &\simeq \{ c \ \bm{1} \in \RR^n \mid  c\in \RR \} \text{ and } \\
 V_{(n-1,1)} &\simeq \{ \bm{X} \in \RR^n \mid  \bm{1}^\top \bm{X} = 0\}, 
\end{align*}
where $\bm{1} \in \RR^n$ is the all one vector and $\bm{1}^\top$ is the transposition of vector $\bm{1}$ and  
we can show directly that $\Hom_{S_n}(V_{(n)}, V_{(n)})\simeq \RR$ and $\Hom_{S_n}(V_{(n-1,1)}, V_{(n-1,1)})=\RR$.  
More explicitly, we can show that 
\begin{align*}
\Hom_{S_n}\left(V_{(n)}\oplus V_{(n-1,1)},  V_{(n)}\oplus V_{(n-1,1)}\right) \simeq \RR\left(\frac{1}{n}\bm{1}\bm{1}^\top\right) \oplus \RR\left(I-\frac{1}{n}\bm{1}\bm{1}^\top \right) \subset \RR^{n\times n}	
\end{align*} 
where $I \in \RR^{n\times n}$ is the unit matrix, and $(1/n)\bm{1}\bm{1}^\top$ (resp. $I-(1/n)\bm{1}\bm{1}^\top$) is the identity on $V_{(n)}$ (resp. $V_{(n-1,1)}$) and the zero map on $V_{(n-1,1)}$  (resp. $V_{(n)}$).  
This implies that 
\begin{align}
	&\Hom_{S_n}(V^n, \RR[S_n]\otimes_{\RR[S_{n-1}]} (V^{(1)}_1)^n) \notag\\ 
	&\simeq \Hom_{S_n}(\RR^n\otimes_\RR V, \RR[S_n]\otimes_{\RR[S_{n-1}]}\RR^n \otimes_{\RR} V^{(1)}_1) \notag \\ 
	&\simeq \Hom_{S_n}(\RR^n, \RR[S_n]\otimes_{\RR[S_{n-1}]}\RR^n)\otimes_\RR \Hom(V, V^{(1)}_1)\notag\\ 
		&\simeq (\Hom_{S_n}(V_{(n)},V_{(n)}^{\oplus 2}) \oplus \Hom_{S_n}(V_{(n-1,1)},V_{(n-1,1)}^{\oplus 3}))\otimes_\RR \Hom(V, V^{(1)}_1).  \label{eq:dec-of-homs}
\end{align}
Thus, by \eqref{eq:dim-equal-to-one}, the dimension of $\Hom_{S_n}(V^n, \RR[S_n]\otimes_{\RR[S_{n-1}]} (V^{(1)}_1)^n)$ is equal to $5\dim V \dim V^{(1)}_1$. 

Next, we consider the bias vector $(\bm{B}_j)_{j=1}^n \in ((V_1^{(1)})^n)^n \simeq \RR[S_n]\otimes_{\RR[S_{n-1}]} (V^{(1)}_1)^n$, where $\bm{B}_j = (\bm{b}_{j1}, \dots, \bm{b}_{jn}) \in (V_1^{(1)})^n$. 
Then, by the action $\ast$ of $S_n$, $(\bm{B}_j)_{j=1}^n$ is invariant, i.e., 
\begin{align*}
	\sigma\ast (\bm{B}_j)_{j=1}^n = ((\sigma_{1j}\sigma\sigma_{1\phi^{(1)}_{\sigma}(j)}^{-1})\cdot\bm{B}_{\phi^{(1)}_{\sigma}(j)})_{j=1}^n
	= (\bm{B}_j)_{j=1}^n. 
\end{align*} 
holds for any $\sigma\in S_n$. 

Without of loss of generality, we may assume that $\sigma_{11}$ is the unit element $e$. 
We consider $\sigma = \sigma_{1j_0}^{-1}\tau \sigma_{11}$ for $\tau\in H_1=\Stab_{S_n}(1)$, we have $\sigma_{1j_0}\sigma = \tau \sigma_{11} = \tau$. 
In particular, $\phi^{(1)}_{\sigma}(j_0)=1$ Thus, $j_0$-th entry of $(\sigma_{1j}^{-1}\tau)\ast(\bm{B}_j)_{j=1}^n$ becomes 
\begin{align}
	(\sigma_{1j_0}\sigma\sigma_{1\phi^{(1)}_{\sigma}(j_0)}^{-1})\cdot\bm{B}_{\phi^{(1)}_{\sigma}(j_0)}
	= \tau\cdot \bm{B}_1. \label{eq:action-on-bias-vector}
\end{align} 
Because $(\bm{B}_{j})_{j=1}^n$ is invariant by the action $\ast$ of $S_n$, 
\[
 \bm{B}_{j_0} = \tau \cdot \bm{B}_1
\]
holds for any $j_0 = 1, 2, \dots, n$ and $\tau \in \Stab_{S_n}(1)$. 
This implies that by $\tau = e$, $\bm{B}_j = \bm{B}_1$ holds for any $j = 1, 2, \dots, n$. 
Furthermore, $\tau \cdot \bm{B}_1 = \bm{B}_1$ holds for any $\tau \in \Stab_{S_n}(1)$. Because the orbit decomposition of $\{1, 2, \dots, n \}$ by $H_1 = \Stab_{S_n}(1)$ is 
\[
 \{1, 2, \dots, n \} = \{1 \} \sqcup \{2, 3, \dots, n \},  
\]
$\bm{B}_1 = (\bm{b}_{11}, \bm{b}_{12}, \dots, \bm{b}_{1n})$ satisfies that $\bm{b}_{12} = \cdots = \bm{b}_{1n}$. This means that the number of free parameters of $\bm{B}_1$ (hence, of $(\bm{B}_j)_{j=1}^n$) is $2\dim V^{(1)}_1$. 

Thus, the number of free parameters of the affine map from the input layer to the first hidden layer is $5\dim V \dim V^{(1)}_1 + 2\dim V^{(1)}_1$.

Next, we consider the linear part of the affine map from the $(k-1)$-th layer $((V^{(1)}_{k-1})^n)^n$ to the $k$-th layer $((V^{(1)}_k)^n)^n$ for $k>1$. 
Then, by Theorem~\ref{thm:isomorphy-representation}, $S_n$ acts on these layers by the action $\ast$, and the representations defined by this action~$\ast$ is isomorphic to the sum of the induced representation of the restricted representation. 
by \eqref{eq:irreducible-rep-decomposition} and a similar argument as \eqref{eq:dec-of-homs},  we have 
\begin{align*}
	&\Hom_{S_n}(\RR[S_n]\otimes_{\RR[S_{n-1}]} (V^{(1)}_{k-1})^n, \RR[S_n]\otimes_{\RR[S_{n-1}]} (V^{(1)}_k)^n) \notag\\ 
	&\simeq \Hom_{S_n}(\RR[S_n]\otimes_{\RR[S_{n-1}]}\RR^n \otimes_{\RR} V^{(1)}_{k-1}, \RR[S_n]\otimes_{\RR[S_{n-1}]}\RR^n \otimes_{\RR} V^{(1)}_k) \notag \\ 
	&\simeq \Hom_{S_n}(\RR[S_n]\otimes_{\RR[S_{n-1}]}\RR^n, \RR[S_n]\otimes_{\RR[S_{n-1}]}\RR^n)\otimes_\RR \Hom(V^{(1)}_{k-1}, V^{(1)}_k)\notag\\ 
		&\simeq (\Hom_{S_n}(V_{(n)}^{\oplus 2},V_{(n)}^{\oplus 2}) \oplus \Hom_{S_n}(V_{(n-1,1)}^{\oplus 3},V_{(n-1,1)}^{\oplus 3}) \\ 
		&\oplus \Hom_{S_n}(V_{(n-2,2)},V_{(n-2,2)})\oplus \Hom_{S_n}(V_{(n-2,1,1)},V_{(n-2,1,1)}))\otimes_\RR \Hom(V^{(1)}_{k-1}, V^{(1)}_k).  %
\end{align*}
Because the dimension of the division algebra over $\RR$ is at most $4$, the dimensions of $\Hom_{S_n}(V_{(n-2,2)},V_{(n-2,2)})$ and $\Hom_{S_n}(V_{(n-2,1,1)},V_{(n-2,1,1)}))$ are at most 
$4$. Thus, the dimension of $\Hom_{S_n}(((V^{(1)}_{k-1})^n)^n, ((V^{(1)}_k)^n)^n)$ is at most $21\dim V^{(1)}_{k-1} \dim V^{(1)}_k$. 

On the other hand, the number of free parameters of the bias vectors is $2\dim V^{(1)}_k$ by the similar argument as above. 
\qedhere 	
\end{proof}

\end{document}